\documentclass[10pt]{article} 
\usepackage{amsthm}
\usepackage{graphicx}
\usepackage{booktabs} 
\newtheorem{definition}{Definition}
\newtheorem{proposition}{Proposition}
\newtheorem{assumption}{Assumption}
\newtheorem{theorem}{Theorem}
\newtheorem{corollary}{Corollary}
\newtheorem{lemma}{Lemma}

\usepackage[preprint]{tmlr}

\usepackage{amsmath,amsfonts,bm}

\def\eqref#1{equation~\ref{#1}}

\def\1{\bm{1}}

\DeclareMathAlphabet{\mathsfit}{\encodingdefault}{\sfdefault}{m}{sl}
\SetMathAlphabet{\mathsfit}{bold}{\encodingdefault}{\sfdefault}{bx}{n}

\usepackage[hidelinks]{hyperref}
\usepackage{url}

\title{Diffusion Operator Geometry of\\Feedforward Representations}

\author{\name Kanishka Reddy \email kani@uw.edu \\
      \addr Department of Applied Mathematics\\
      University of Washington}

\def\month{MM}  
\def\year{YYYY} 
\def\openreview{\url{https://openreview.net/forum?id=XXXX}} 

\newtheorem{remark}{Remark}
\begin{document}

\maketitle

\begin{abstract}
Feedforward neural networks transform data through learned representations
whose geometry shapes how classes separate and relate across successive
layers. We study that geometry through diffusion operators. Each feature-cloud
snapshot is assigned a Gaussian-kernel Markov operator, giving a smooth
description of one-step transport between classes from which spectral,
boundary, and local geometric information can be read. We define the empirical
class chain, state the condition under which it is an exact Markov quotient,
and derive both the corresponding population transition and a simpler overlap
chain based on expected class affinities. For balanced shared-covariance
Gaussian class-conditional snapshots these affinities have closed forms
controlled by a regularized Mahalanobis separation, which yields explicit
expressions for leakage and coarse spectral behaviour. We further show that
operator observables vary smoothly under feature perturbations, whereas hard
neighborhood graphs are controlled by neighbor-order margins. Experiments on
CIFAR-10 and CIFAR-100 ResNet-18 representations find that class transport
becomes increasingly persistent with depth while retaining structured
relations between classes, and that the diffusion class chain is more stable
than its $k$-nearest-neighbor counterpart under matched perturbations.
\end{abstract}

\section{Introduction}
\label{sec:intro}

Neural networks reshape data through their learned representations. If
$\Phi_\ell:\mathbb R^{d_0}\to\mathbb R^{d_\ell}$ is the map to layer
$\ell$, then the feature cloud
\[
Z^\ell=\{z_i^\ell\}_{i=1}^n,
\qquad
z_i^\ell=\Phi_\ell(x_i),
\]
evolves with depth and training. As this happens, the snapshots often become
more separable and lower-dimensional, with their organization increasingly
aligned with label structure
\citep{ansuini2019,alain2017,naitzat2020,kornblith2019,
papyan2020,zhu2021}. Their geometry therefore offers a natural way to study
representation learning.

Recent graph-geometric work approaches this evolution by constructing
neighborhood graphs on feature clouds and measuring discrete curvature or
related Ricci-flow-inspired quantities
\citep{baptista2024,hehl2025}. These studies connect geometric change with
class separation and community formation, although their primitive object is
a hard adjacency, which means that a small perturbation may rewire a
$k$-nearest-neighbor graph and cause the resulting measurements to depend on
both the graph construction and the chosen curvature discretization
\citep{ollivier2009,forman2003,garciatrillos2023}.

Here each representation snapshot is assigned a smooth diffusion-map Markov
operator \citep{coifman2006}. For bandwidth $\varepsilon>0$, set
\[
W_{ij}^\ell
=
\exp\!\left(
-\frac{\|z_i^\ell-z_j^\ell\|^2}{4\varepsilon}
\right),
\qquad
P^\ell=(D^\ell)^{-1}W^\ell,
\qquad
L^\ell=\frac{P^\ell-I}{\varepsilon}.
\]
The normalization serves as an operator-valued description of the learned
feature cloud rather than as an embedding algorithm. Its associated
carr\'e du champ is
\[
\Gamma^\ell(f,h)(i)
=
\frac{1}{2\varepsilon}
\sum_j P_{ij}^\ell
(f_j-f_i)(h_j-h_i).
\]
The operator $P^\ell$ gives class transport and coarse spectral structure,
while $\Gamma^\ell$ yields label-boundary and local quantities,
including soft diffusion radii and metric surrogates, so several geometric
summaries are read from the same Markov operator rather than attached as
separate scores to a chosen graph
\citep{bakry2014,jones2024,jones2026}.

Labels reveal the class structure carried by the operator. Averaging one-step
mass over source and destination classes gives a $K\times K$ transition
matrix whose diagonal entries measure class persistence, while the
off-diagonal entries retain relations that a scalar purity score would lose.
Throughout the paper, this matrix is treated as an aggregate one-step chain,
becoming an exact Markov quotient when the label partition satisfies the
standard lumpability condition.

The population formulation follows the same construction. Row-normalizing
the kernel at each source point and then averaging over a class gives the
exact population transition, whereas averaging the expected class affinities
first and normalizing afterward gives a simpler overlap chain. The two agree
when classwise kernel degrees are homogeneous, and
Proposition~\ref{prop:degree-heterogeneity} controls their discrepancy through
the amount of within-class degree variation.

A shared-covariance Gaussian mixture makes this distinction analytically
tractable. At a fixed layer and training time, suppose
\[
z\mid y=a \sim \mathcal N(\mu_a,\Sigma),
\qquad a\in[K],
\]
with balanced classes. The expected affinities depend on the pairwise
quantities
\[
c_\varepsilon^{(a,b)}
=
\tfrac14(\mu_a-\mu_b)^\top
(\varepsilon I+\Sigma)^{-1}
(\mu_a-\mu_b),
\]
which also determine the induced overlap chain. Under homogeneous class
geometry, leakage and the coarse spectrum become explicit monotone functions
of these regularized separations, although neither the sample construction
nor the general population definitions requires a Gaussian assumption.

The operator structure also gives perturbation control. For fixed
$\varepsilon>0$ and bounded feature clouds, the map
$Z\mapsto P_\varepsilon(Z)$ is locally Lipschitz, so aggregate class transport
and local geometric summaries change continuously with the features. A hard
$k$-nearest-neighbor graph behaves differently because its adjacency remains
fixed only while the neighbor-order margins stay open and may jump when one
of those margins closes, leaving smooth reweighting and combinatorial
rewiring as the two distinct mechanisms
\citep{ting2010,calder2022lipschitz}.

The framework is evaluated on three independently trained ResNet-18 models
for each of CIFAR-10 and CIFAR-100
\citep{he2016,krizhevsky2009}. Class persistence rises sharply with depth,
while the remaining off-diagonal transport retains recognizable semantic
structure, and on CIFAR-100 the residual mass also becomes increasingly
concentrated within the known semantic superclasses. Matched perturbation
experiments place the diffusion and directed $k$-nearest-neighbor class
chains on the same total-variation scale, with further analyses examining
degree heterogeneity, lumpability, density normalization, and variation from
the evaluation subset.

\section{Related work}
\label{sec:related}

\paragraph{Diffusion maps and operator geometry.}
Diffusion maps associate a finite sample with a Markov operator whose
spectrum and powers encode the geometry of the data
\citep{coifman2006,coifman2005pnas,singer2006,vonluxburg2007}. Their
continuum limits and dependence on the sampling distribution have been
studied through graph-Laplacian consistency results such as
\citet{hein2005}, while related operators have been used to estimate transfer
dynamics and construct nonlinear visualizations
\citep{klus2018transfer,moon2019}. We use the same diffusion-map
normalization, although our object of interest is the Markov operator itself
and the way its one-step mass moves between classes in a learned feature
space.

The operator also supports a local calculus through the carr\'e du champ,
which gives boundary energies, diffusion scales, and metric surrogates from
the same kernel construction
\citep{bakry2014,jones2024}. Supervised diffusion maps instead modify the
diffusion process using labels or a label kernel \citep{mendelman2025}. In
our construction the kernel depends only on the feature geometry, and labels
enter afterward when the resulting transport is averaged over source and
destination classes, so the class chain records structure already present in
the representation.

\paragraph{Distributional kernel summaries and Markov aggregation.}
For class distributions $\nu_a$ and $\nu_b$, the expected affinity
\[
A_{ab}
=
\mathbb E_{X\sim\nu_a,\,X'\sim\nu_b} k(X,X')
\]
is the inner product of their kernel mean embeddings
\citep{muandet2017,gretton2012}. The symmetric combination
$A_{aa}+A_{bb}-2A_{ab}$ gives the squared maximum mean discrepancy, while
the overlap chain keeps the individual cross-affinities and normalizes them
into directed transition probabilities. This viewpoint places the Gaussian
separation formulas inside a broader distributional kernel geometry and
explains why the population construction remains meaningful beyond the
Gaussian model.

Passing from a sample-level Markov chain to a class-level chain raises a
separate aggregation question. A partition gives an exact Markov quotient
only under the standard lumpability condition \citep{kemeny1976}, and the
statistical literature has developed tests and diagnostics for candidate
lumpings \citep{jernigan2003}. We state the corresponding condition for the
label partition, measure its failure directly, and use the aggregate class
matrix as a one-step transport summary when exact lumpability does not hold.

\paragraph{Geometry of learned representations.}
Neural feature geometry has been studied through intrinsic dimension,
separability, topology, representational similarity, and neural collapse
\citep{ansuini2019,alain2017,naitzat2020,kornblith2019,
papyan2020,zhu2021}. Other work asks how strongly the representation itself
moves during training, ranging from tangent-kernel descriptions of relatively
lazy regimes \citep{jacot2018,lee2018} to analyses that retain substantial
feature learning \citep{yanghu2021}. Our framework begins from the realized
feature snapshots and can therefore be applied without committing to a
particular training-limit description.

Several recent methods also summarize learned representations through
diffusion or global geometric statistics. Diffusion spectral entropy tracks
representation dynamics through a scalar spectral summary
\citep{dse2023}, while \citet{networkmanifold2024} embed entire trained
networks into a common comparison space. We retain the feature cloud at a
single depth and training time together with the full $K\times K$ matrix of
one-step class transport, preserving directional relations between classes
that a single global statistic would discard.

Recent work by \citet{khandait2026layers} uses diffusion-geometric Markov
operators to construct multiscale and multilayer measures of representational
similarity. We take a complementary perspective, aggregating an unsupervised
feature-space operator by labels, then distinguishing exact row-normalized population
transport from affinity-level overlap, which helps us to study class persistence,
lumpability, and perturbation response.

\paragraph{Graph curvature and Ricci flow on neural features.}
Ollivier curvature is built from optimal transport, while Forman curvature
arises from a combinatorial analogue of the Bochner identity
\citep{villani2008,ollivier2009,forman2003}. Graph-curvature and
Ricci-flow procedures have also been used to expose community structure by
evolving distances or edge weights
\citep{linluyau2011,ni2019}, and more recent work applies this perspective
to neural feature clouds, where layerwise curvature is interpreted through
class separation, community formation, and Ricci-flow-like evolution
\citep{baptista2024,hehl2025}.

The closest point of contact is \citet{hehl2025}, who study the layerwise
geometry of neural feature clouds through curvature on neighborhood graphs.
We ask the same broad question about how representation geometry changes
with depth, but begin from a diffusion operator rather than a hard
neighborhood graph, which allows us to describe one-step class transport
directly, derive an exact population transition and a tractable
affinity-based approximation, and control the response of the resulting
observables to perturbations of the features. The same change of primitive
also separates smooth reweighting of pairwise affinities from the discrete
rewiring of a neighborhood graph.

\section{Operator representation snapshots}
\label{sec:operator-snapshots}

\begin{definition}[Diffusion operator snapshot]
\label{def:operator-snapshot}
Let $Z=\{z_i\}_{i=1}^n\subset\mathbb R^d$. For $\varepsilon>0$, define
\[
W_{ij}=\exp\!\left(-\frac{\|z_i-z_j\|^2}{4\varepsilon}\right),
\qquad
D_i=\sum_jW_{ij},
\qquad
P_{ij}=\frac{W_{ij}}{D_i}.
\]
We call $P=P_\varepsilon(Z)$ the diffusion Markov operator of the snapshot,
with generator $L=(P-I)/\varepsilon$.
\end{definition}

The main text uses the raw-kernel random walk, which corresponds to
$\alpha=0$ in the Coifman--Lafon family. Replacing the kernel by
\[
W_{ij}^{(\alpha)}=\frac{W_{ij}}{(q_iq_j)^\alpha},
\qquad q_i=\sum_jW_{ij},
\]
gives the corresponding density-adjusted operator. Its affinity-level class
chain is recorded in Appendix~\ref{app:normalization}.

Suppose now that each point has a label $y_i\in[K]$. For any row-stochastic
matrix $Q$, let
\[
B_i^Q(b)=\sum_{j:y_j=b}Q_{ij}
\]
denote the one-step mass sent from node $i$ to class $b$. Averaging these
vectors within each source class gives
\[
T^{\mathrm{agg}}_{ab}(Q)
=
\frac{1}{n_a}\sum_{i:y_i=a}B_i^Q(b),
\qquad n_a=\#\{i:y_i=a\}.
\]
Thus every row-stochastic $Q$ induces a class-level one-step summary. It is an
exact Markov quotient only when nodes in the same class send identical mass
to every destination class,
\[
B_i^Q(\cdot)=B_{i'}^Q(\cdot)
\quad\text{whenever }y_i=y_{i'}.
\]
A convenient measure of departure from this lumpability condition is
\[
R_{\mathrm{lump}}(Q)
=
\frac1n\sum_{i=1}^n
\left\|B_i^Q-T^{\mathrm{agg}}_{y_i,\cdot}(Q)\right\|_1.
\]
The residual vanishes under exact lumpability. Otherwise,
$T^{\mathrm{agg}}$ still describes the average one-step transport between
classes, although iterating it need not reproduce the class projection of
$Q^t$.

The Gaussian kernel has the fixed self-weight $W_{ii}=1$. To prevent this
finite-sample self-loop from artificially increasing class persistence, the
empirical class chain is computed after removing the diagonal and
renormalizing each row,
\[
\widetilde P_{ij}
=
\frac{P_{ij}\mathbf 1\{i\ne j\}}
{\sum_{r\ne i}P_{ir}},
\]
so that class transport is summarized by
$T^{\mathrm{agg}}(\widetilde P)$. Quantities describing the local operator,
as well as raw cross-label mass, continue to use $P$.

\paragraph{Distinct leakage summaries.}
At the sample level, raw leakage is
\[
\mathcal L_{\mathrm{raw}}(P)
=
\frac1n\sum_i\sum_{j:y_j\ne y_i}P_{ij}.
\]
For a $K\times K$ row-stochastic class matrix $T$, mean persistence and its
uniform-source complement are
\[
\mathcal P(T)=\frac1K\operatorname{tr}(T),
\qquad
\mathcal L_{\mathrm{unif}}(T)
=1-\frac1K\operatorname{tr}(T).
\]
For a chosen stationary distribution $\pi^T$, stationary leakage is
\[
\mathcal L_{\mathrm{stat}}(T)
=
\sum_a\pi_a^T(1-T_{aa}),
\qquad \pi^T T=\pi^T.
\]
These quantities coincide only under additional symmetry, so they are kept
separate throughout.

The discrete carr\'e du champ associated with $P$ is
\[
\Gamma(f,h)(i)
=
\frac{1}{2\varepsilon}\sum_jP_{ij}(f_j-f_i)(h_j-h_i).
\]
For the one-hot label map $g(i)=e_{y_i}$, summing its coordinate energies gives
\[
\mathcal E_{\mathrm{label}}
=
\frac1n\sum_i\sum_{a=1}^K\Gamma(g_a)(i)
=
\frac{\mathcal L_{\mathrm{raw}}(P)}{\varepsilon}.
\]
The scalar total is therefore a rescaled form of raw leakage, while the
classwise terms show where the cross-label mass is concentrated.

Local spread around node $i$ is summarized by the soft diffusion radius
\[
\rho_\varepsilon^2(i)=\sum_jP_{ij}\|z_j-z_i\|^2.
\]
We report the second-largest eigenvalue modulus as a coarse spectral summary. Ordering
the eigenvalues so that $\lambda_1=1$ and
$|\lambda_1|\ge|\lambda_2|\ge\cdots$, define
\[
\operatorname{SLEM}(T)=|\lambda_2(T)|,
\qquad
\operatorname{gap}(T)=1-\operatorname{SLEM}(T).
\]

\section{Population operator geometry}
\label{sec:population-geometry}

\subsection{Exact row-normalized population transport and the overlap chain}
\label{sec:distribution-free-overlap}

Let $\nu_a$ be the class-conditional representation law and $\pi_a$ its
prior. Define
\[
r_b(x)=\int k_\varepsilon(x,x')\,d\nu_b(x'),
\qquad
s(x)=\sum_{c=1}^K\pi_c r_c(x).
\]
Averaging and row normalization can be ordered in two ways. Normalizing around
each source point first gives the exact one-step population class transition
\[
T^{\mathrm{pop}}_{ab}
=
\mathbb E_{X\sim\nu_a}
\left[\frac{\pi_b r_b(X)}{s(X)}\right].
\]
The same order of operations appears in the sample construction.

The overlap construction instead begins with the class affinities
\[
A_{ab}
=
\mathbb E_{X\sim\nu_a,X'\sim\nu_b}k_\varepsilon(X,X')
=
\mathbb E_{X\sim\nu_a}r_b(X),
\qquad
\bar s_a=\sum_c\pi_cA_{ac},
\]
and normalizes only after source-class averaging,
\[
T^{\mathrm{ov}}_{ab}
=
\frac{\pi_bA_{ab}}{\bar s_a}.
\]
We call $T^{\mathrm{ov}}$ the overlap chain. No Gaussian assumption is needed
at this stage: once the pairwise affinities are known, the chain follows
directly.

\begin{proposition}[Degree-heterogeneity control]
\label{prop:degree-heterogeneity}
For every source class $a$ with $\bar s_a>0$,
\[
\left\|T^{\mathrm{pop}}_{a\cdot}-T^{\mathrm{ov}}_{a\cdot}\right\|_1
\le
\frac{\mathbb E_{X\sim\nu_a}|s(X)-\bar s_a|}{\bar s_a}
\le
\frac{\sqrt{\operatorname{Var}_{\nu_a}(s(X))}}{\bar s_a}.
\]
Thus the overlap chain is accurate when the population kernel degree is
nearly homogeneous within each source class.
\end{proposition}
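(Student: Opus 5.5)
The plan is to work one source class $a$ at a time and write the difference of the two rows as a single expectation over $X\sim\nu_a$. The key observation is that $\mathbb E_{X\sim\nu_a} r_b(X)=A_{ab}$. Hence
\[
T^{\mathrm{ov}}_{ab}=\mathbb E_{X\sim\nu_a}\!\left[\frac{\pi_b r_b(X)}{\bar s_a}\right],
\]
and subtracting this from the definition of $T^{\mathrm{pop}}_{ab}$ gives
\[
T^{\mathrm{pop}}_{ab}-T^{\mathrm{ov}}_{ab}
=\mathbb E_{X\sim\nu_a}\!\left[\pi_b r_b(X)\left(\frac{1}{s(X)}-\frac{1}{\bar s_a}\right)\right]
=\mathbb E_{X\sim\nu_a}\!\left[\frac{\pi_b r_b(X)}{s(X)}\cdot\frac{\bar s_a-s(X)}{\bar s_a}\right].
\]
Here I use that $s(X)>0$ almost surely, which holds because the Gaussian kernel is strictly positive. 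The same holds for any positive kernel; if the kernel could vanish, one would restrict to $\{s>0\}$.

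Next I sum absolute values over $b$. Moving the absolute value inside the expectation and using $r_b\ge 0$ gives
\[
\sum_b\left|T^{\mathrm{pop}}_{ab}-T^{\mathrm{ov}}_{ab}\right|
\le\mathbb E_{X\sim\nu_a}\!\left[\Bigl(\sum_b\frac{\pi_b r_b(X)}{s(X)}\Bigr)\frac{|s(X)-\bar s_a|}{\bar s_a}\right].
\]
Since $\sum_b\pi_b r_b(X)=s(X)$ by definition, the inner factor equals $1$. What remains is exactly the first bound, $\mathbb E_{\nu_a}|s(X)-\bar s_a|/\bar s_a$. This is the step to get right: I must put the normalizer $s(X)$ under the same expectation so that the weights $\pi_b r_b(X)/s(X)$ form a probability vector pointwise. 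Bounding the two rows separately would lose this telescoping.

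For the second inequality I first need $\bar s_a=\mathbb E_{\nu_a}s(X)$. This follows from linearity: $\mathbb E_{\nu_a}s(X)=\sum_c\pi_c\,\mathbb E_{\nu_a}r_c(X)=\sum_c\pi_cA_{ac}=\bar s_a$. So $\bar s_a$ is the mean of $s(X)$ under $\nu_a$, and Jensen's (or Cauchy--Schwarz's) inequality gives $\mathbb E|s-\mathbb E s|\le\sqrt{\operatorname{Var}_{\nu_a}(s(X))}$, which is the second bound.

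The only technical point is integrability. Since $0<k_\varepsilon\le 1$, we have $0\le r_b\le 1$ and $0\le s\le 1$, so every expectation is finite and the interchange of the finite sum over $b$ with the expectation is justified.
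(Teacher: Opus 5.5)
Your proposal is correct and follows essentially the same route as the paper. Both arguments write the two rows as expectations under $\nu_a$, move the absolute value inside, use $\sum_b\pi_b r_b(X)=s(X)$ to collapse the sum to $|s(X)-\bar s_a|/\bar s_a$, and finish with Cauchy--Schwarz. Your explicit check that $\bar s_a=\mathbb E_{\nu_a}s(X)$, which is what makes the last bound a standard deviation, and your positivity and integrability remarks fill in details the paper leaves implicit.
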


\begin{proof}
Using $\sum_b\pi_br_b(X)=s(X)$,
\begin{align*}
\left\|T^{\mathrm{pop}}_{a\cdot}-T^{\mathrm{ov}}_{a\cdot}\right\|_1
&\le
\mathbb E\left[
\sum_b\pi_br_b(X)
\left|\frac1{s(X)}-\frac1{\bar s_a}\right|
\right]\\
&=
\frac{\mathbb E|s(X)-\bar s_a|}{\bar s_a}.
\end{align*}
The variance bound follows from Cauchy--Schwarz.
\end{proof}

The bound ties the approximation error directly to variation of the
population kernel degree within each source class. In the experiments we
report the corresponding sample coefficient of variation alongside the
overlap-chain results.

A symmetric affinity separation is
\[
\widetilde c_\varepsilon^{(a,b)}
=
-\log\frac{A_{ab}}{\sqrt{A_{aa}A_{bb}}}.
\]
The Gaussian model below makes this quantity explicit.

\subsection{Exact Gaussian affinities and induced overlap chain}
\label{sec:gaussian-specialization}

\begin{assumption}[Balanced Gaussian snapshot model]
\label{assump:gaussian-snapshot}
At a fixed layer and training time,
$z\mid y=a\sim\mathcal N(\mu_a,\Sigma)$ for $a\in[K]$, with
$\mathbb P(y=a)=1/K$ and shared covariance $\Sigma\succeq0$.
\end{assumption}

Define
\[
c_\varepsilon^{(a,b)}
=
\frac14(\mu_a-\mu_b)^\top
(\varepsilon I+\Sigma)^{-1}
(\mu_a-\mu_b),
\qquad
c_\varepsilon^{(a,a)}=0.
\]

\begin{theorem}[Exact Gaussian affinities and induced overlap chain]
\label{thm:gaussian-bridge}
Let $X_a\sim\mathcal N(\mu_a,\Sigma)$ and
$X_b\sim\mathcal N(\mu_b,\Sigma)$ be independent. Then
\[
A_{ab}
=
\mathbb E[k_\varepsilon(X_a,X_b)]
=
A_0e^{-c_\varepsilon^{(a,b)}},
\qquad
A_0=\det\!\left(I+\frac{\Sigma}{\varepsilon}\right)^{-1/2}.
\]
Consequently, the balanced overlap chain is
\[
T^{\mathrm{ov}}_{ab}
=:\bar P_{ab}
=
\frac{e^{-c_\varepsilon^{(a,b)}}}
{\sum_{r=1}^K e^{-c_\varepsilon^{(a,r)}}}.
\]
The complete overlap-chain transition structure is determined by the pairwise
separations $c_\varepsilon^{(a,b)}$, and
$\widetilde c_\varepsilon^{(a,b)}=c_\varepsilon^{(a,b)}$.
\end{theorem}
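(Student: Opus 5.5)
The argument reduces to a single Gaussian integral. Write $k_\varepsilon(x,x')=\exp(-\|x-x'\|^2/(4\varepsilon))$, matching Definition~\ref{def:operator-snapshot}. Since $X_a$ and $X_b$ are independent with common covariance $\Sigma$, the difference $U=X_a-X_b$ is Gaussian, $U\sim\mathcal N(\delta,2\Sigma)$ with $\delta=\mu_a-\mu_b$. Hence
\[
A_{ab}=\mathbb E\exp\!\left(-\tfrac{1}{4\varepsilon}\|U\|^2\right).
\]
This is the standard moment generating function of a Gaussian quadratic form: for $U\sim\mathcal N(\delta,S)$ and $M\succeq0$,
\[
\mathbb E\, e^{-\frac12 U^\top M U}=\det(I+SM)^{-1/2}\exp\!\left(-\tfrac12\delta^\top (M^{-1}+S)^{-1}\delta\right).
\]
Equivalently, the exponent can be written as $-\tfrac12\delta^\top M(I+SM)^{-1}\delta$, which avoids inverting $M$. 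I would derive this by completing the square in the density of $U$. This requires $S\succ0$; the degenerate case $\Sigma\succeq0$ follows by continuity, replacing $\Sigma$ with $\Sigma+\eta I$ and letting $\eta\downarrow0$, or by working on the range of $\Sigma$.

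Substituting $M=\tfrac{1}{2\varepsilon}I$ and $S=2\Sigma$ gives the determinant factor
\[
\det(I+2\Sigma/(2\varepsilon))^{-1/2}=\det(I+\Sigma/\varepsilon)^{-1/2}=A_0.
\]
The exponent becomes
\[
-\tfrac12\,\delta^\top(2\varepsilon I+2\Sigma)^{-1}\delta=-\tfrac14\,\delta^\top(\varepsilon I+\Sigma)^{-1}\delta=-c_\varepsilon^{(a,b)}.
\]
The main care needed is exactly this constant bookkeeping, namely the $4\varepsilon$ in the kernel against the $2\Sigma$ covariance of the difference, to land on $\tfrac14$ and $(\varepsilon I+\Sigma)^{-1}$. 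As a sanity check, I would verify the scalar case $d=1$ directly.

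The overlap chain then follows by substitution into the definition of $T^{\mathrm{ov}}$ from Section~\ref{sec:distribution-free-overlap}. With $\pi_b=1/K$ and $A_{ab}=A_0e^{-c_\varepsilon^{(a,b)}}$, we get
\[
T^{\mathrm{ov}}_{ab}=\frac{\pi_bA_{ab}}{\sum_r\pi_rA_{ar}}.
\]
The common factors $A_0/K$ cancel, leaving the softmax $e^{-c_\varepsilon^{(a,b)}}/\sum_r e^{-c_\varepsilon^{(a,r)}}$. Because $A_0$ depends only on $\Sigma$ and $\varepsilon$, every transition is a function of the pairwise separations alone.

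Finally, $A_{aa}=A_{bb}=A_0$ since $c_\varepsilon^{(a,a)}=0$. Therefore $A_{ab}/\sqrt{A_{aa}A_{bb}}=e^{-c_\varepsilon^{(a,b)}}$, and taking $-\log$ gives $\widetilde c_\varepsilon^{(a,b)}=c_\varepsilon^{(a,b)}$.
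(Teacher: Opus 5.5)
Your proof is correct and follows the paper's route. You pass to $X_a-X_b\sim\mathcal N(\mu_a-\mu_b,2\Sigma)$, evaluate the resulting Gaussian integral, let $A_0$ cancel under balanced row normalization, and get $\widetilde c_\varepsilon^{(a,b)}=c_\varepsilon^{(a,b)}$ from $A_{aa}=A_{bb}=A_0$. The only difference is in evaluating the integral: you use the matrix quadratic-form MGF, which needs $\Sigma\succ0$ plus a limiting argument, whereas the paper diagonalizes $\Sigma$ and multiplies one-dimensional formulas that stay valid at zero eigenvalues, so it covers $\Sigma\succeq0$ directly.
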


The theorem gives the affinity matrix and its induced overlap chain in closed
form. Proposition~\ref{prop:degree-heterogeneity} relates this chain to the
row-normalized population transition. Shared covariance makes the Gaussian
integral tractable, while the remaining approximation is governed by how much
$s(X)$ varies within a source class.

Let
\[
w_{ab}=e^{-c_\varepsilon^{(a,b)}},
\qquad
q_a=\sum_rw_{ar}.
\]
Since $w_{ab}=w_{ba}$, the overlap chain is reversible with stationary
distribution
\[
\bar\pi_a=\frac{q_a}{\sum_sq_s}.
\]

\begin{proposition}[Stationary leakage of the Gaussian overlap chain]
\label{prop:population-leakage}
Under Assumption~\ref{assump:gaussian-snapshot},
\[
\mathcal L_{\mathrm{stat}}(T^{\mathrm{ov}})
=
\frac{\sum_{a\ne b}e^{-c_\varepsilon^{(a,b)}}}
{K+\sum_{a\ne b}e^{-c_\varepsilon^{(a,b)}}}.
\]
\end{proposition}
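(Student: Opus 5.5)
The plan is to compute the stationary leakage directly from the closed form of the overlap chain in Theorem~\ref{thm:gaussian-bridge}, using the reversible stationary distribution $\bar\pi$ identified just before the proposition. Under Assumption~\ref{assump:gaussian-snapshot}, the theorem gives $T^{\mathrm{ov}}_{ab}=w_{ab}/q_a$ with $w_{ab}=e^{-c_\varepsilon^{(a,b)}}$ and $q_a=\sum_r w_{ar}$.

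The first step is to confirm stationarity of $\bar\pi_a=q_a/\sum_s q_s$. Symmetry $w_{ab}=w_{ba}$ holds because $c_\varepsilon^{(a,b)}$ is a quadratic form in $\mu_a-\mu_b$. This gives detailed balance, $\bar\pi_a T^{\mathrm{ov}}_{ab}=w_{ab}/\sum_s q_s=\bar\pi_b T^{\mathrm{ov}}_{ba}$. Summing over $a$ then yields $\bar\pi T^{\mathrm{ov}}=\bar\pi$.

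Since $\mathcal L_{\mathrm{stat}}$ depends on a chosen stationary distribution, I will also remark on uniqueness. Every $w_{ab}>0$, so $T^{\mathrm{ov}}$ is strictly positive and hence irreducible. Its stationary distribution is therefore unique, and the formula is unambiguous.

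The second step is the computation itself:
\[
\mathcal L_{\mathrm{stat}}(T^{\mathrm{ov}})=\sum_a \bar\pi_a\,(1-T^{\mathrm{ov}}_{aa})
=\sum_a \frac{q_a}{\sum_s q_s}\cdot\frac{q_a-w_{aa}}{q_a}
=\frac{\sum_a (q_a-w_{aa})}{\sum_s q_s}.
\]
Since $c_\varepsilon^{(a,a)}=0$, we have $w_{aa}=1$, so $q_a-w_{aa}=\sum_{b\ne a}w_{ab}$. Hence the numerator is $\sum_{a\ne b}e^{-c_\varepsilon^{(a,b)}}$ and the denominator is $K+\sum_{a\ne b}e^{-c_\varepsilon^{(a,b)}}$, which is the claimed identity.

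There is no real obstacle here. The only points requiring care are the appeal to irreducibility for uniqueness of $\bar\pi$, and the observation that the balanced priors $\pi_b=1/K$ cancel in $T^{\mathrm{ov}}$, which the theorem has already absorbed.
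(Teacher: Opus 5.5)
Your proposal is correct and follows the paper's proof essentially line for line: symmetry $w_{ab}=w_{ba}$ gives detailed balance with $\bar\pi_a=q_a/\sum_s q_s$, and then $w_{aa}=1$ turns $\sum_a\bar\pi_a(1-T^{\mathrm{ov}}_{aa})$ into the stated ratio. Your remark that strict positivity of $T^{\mathrm{ov}}$ makes the stationary distribution unique is a small but worthwhile addition, since the paper defines $\mathcal L_{\mathrm{stat}}$ relative to a chosen stationary distribution and does not address this point.
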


\begin{corollary}[Homogeneous separation]
\label{cor:homogeneous}
If $c_\varepsilon^{(a,b)}=c$ for all $a\ne b$, then
\[
\mathcal L_{\mathrm{stat}}(T^{\mathrm{ov}})
=
\frac{(K-1)e^{-c}}{1+(K-1)e^{-c}},
\qquad
\lambda_1=1,
\qquad
\lambda_2=\cdots=\lambda_K
=
\frac{1-e^{-c}}{1+(K-1)e^{-c}}.
\]
The SLEM is $\lambda_2$ and
\[
\operatorname{gap}(T^{\mathrm{ov}})
=
\frac{Ke^{-c}}{1+(K-1)e^{-c}}.
\]
\end{corollary}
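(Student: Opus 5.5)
The plan is to specialize Proposition~\ref{prop:population-leakage} to the homogeneous case for the leakage, and to diagonalize the explicitly structured overlap matrix for the spectrum.

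\emph{Leakage.} With $c_\varepsilon^{(a,b)}=c$ for all $a\ne b$, the off-diagonal sum is $\sum_{a\ne b}e^{-c}=K(K-1)e^{-c}$. Substituting this into Proposition~\ref{prop:population-leakage} and dividing numerator and denominator by $K$ gives the stated leakage. As a consistency check, every row sum is $q_a=1+(K-1)e^{-c}$, so $\bar\pi$ is uniform. Moreover $\bar P_{aa}=1/(1+(K-1)e^{-c})$ for every $a$, so $\sum_a\bar\pi_a(1-\bar P_{aa})$ reproduces the same expression directly.

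\emph{Spectrum.} By Theorem~\ref{thm:gaussian-bridge}, the homogeneous overlap chain takes the form
\[
T^{\mathrm{ov}}=\frac{(1-e^{-c})I+e^{-c}\1\1^\top}{1+(K-1)e^{-c}},
\]
since $w_{aa}=1$ (because $c_\varepsilon^{(a,a)}=0$) and $w_{ab}=e^{-c}$ for $a\ne b$. This is a scalar multiple of $\alpha I+\beta\1\1^\top$, whose spectrum is immediate:
\begin{itemize}
\item the eigenvector $\1$ has eigenvalue $(\alpha+K\beta)/q=1$;
\item the $(K-1)$-dimensional complement $\1^\perp$ has eigenvalue $\alpha/q=(1-e^{-c})/(1+(K-1)e^{-c})$.
\end{itemize}
Since the matrix is symmetric, it is diagonalizable, and these are all the eigenvalues with the stated multiplicities.

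\emph{SLEM and gap.} Here one must check that $\lambda_2$ is genuinely the second-largest eigenvalue in modulus, so that $\operatorname{SLEM}=|\lambda_2|=\lambda_2$. For $c\ge0$ we have $0\le 1-e^{-c}<1+(K-1)e^{-c}$, so $0\le\lambda_2<1$. Hence $\lambda_2$ is nonnegative and it is the SLEM. This sign check is the only potentially delicate point, since $c_\varepsilon\ge0$ follows from $\varepsilon I+\Sigma\succ0$. Finally,
\[
\operatorname{gap}=1-\lambda_2=\frac{1+(K-1)e^{-c}-1+e^{-c}}{1+(K-1)e^{-c}}=\frac{Ke^{-c}}{1+(K-1)e^{-c}}.
\]
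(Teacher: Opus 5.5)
Your proposal is correct and follows essentially the paper's route: write $T^{\mathrm{ov}}$ as a combination of $I$ and $\mathbf{1}\mathbf{1}^\top$, read the eigenvalues off $\mathbf{1}$ and $\mathbf{1}^\perp$, and get the leakage from the common diagonal entry (you also derive it from Proposition~\ref{prop:population-leakage}). Your explicit check that $0\le\lambda_2<1$, using $c\ge 0$, justifies the step where the paper equates the gap with $1-\lambda_2$ without comment.
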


\begin{remark}[Coarse and local information]
\label{rem:coarse-local}
Under Assumption~\ref{assump:gaussian-snapshot}, the affinity matrix is
determined by the common scale factor $A_0$ together with the pairwise
separations $c_\varepsilon^{(a,b)}$. After row normalization, the factor
$A_0$ cancels, so $T^{\mathrm{ov}}$, its stationary leakage, and its coarse
spectrum depend only on the separation matrix
$\{c_\varepsilon^{(a,b)}\}_{a,b}$. Kernel-tilted overlap proxies for local $\Gamma$ quantities, and soft
diffusion radius derived in Appendix~\ref{app:local-geometry} retain
covariance directions and the orientations of the mean differences.
\end{remark}

\section{Stability of operator observables}
\label{sec:stability}

For a differentiable feature path
$t\mapsto Z_t=(z_1(t),\ldots,z_n(t))$, let
\[
v_t=\max_i\|\dot z_i(t)\|.
\]

\begin{theorem}[Local data-dependent operator stability]
\label{thm:local-operator-stability}
For row $i$ of $P_\varepsilon(Z_t)$,
\[
\left\|\frac{d}{dt}P_i(Z_t)\right\|_1
\le
\frac{2v_t}{\varepsilon}\rho_\varepsilon(i;Z_t).
\]
Consequently,
\[
\|P_i(Z_1)-P_i(Z_0)\|_1
\le
\frac{2}{\varepsilon}
\int_0^1v_t\rho_\varepsilon(i;Z_t)\,dt.
\]
\end{theorem}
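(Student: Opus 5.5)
The plan is to differentiate the row $P_i(Z_t)$ directly, working in logarithmic coordinates. Write $a_{ij}(t)=-\|z_i(t)-z_j(t)\|^2/(4\varepsilon)$, so that $P_{ij}=e^{a_{ij}}/\sum_r e^{a_{ir}}$. Since $D_i\ge W_{ii}=1>0$, each $P_{ij}$ is a smooth function of $Z$, and $t\mapsto P_i(Z_t)$ is differentiable. The softmax derivative then gives
\[
\frac{d}{dt}P_{ij}
=
P_{ij}\Bigl(\dot a_{ij}-\sum_r P_{ir}\dot a_{ir}\Bigr),
\]
that is, the derivative of row $i$ is $P_{ij}$ times the centred velocity $\dot a_{ij}-\mathbb E_{P_i}[\dot a_{i\cdot}]$.

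Next I bound the log-weight velocities. We have
\[
\dot a_{ij}=-\frac{1}{2\varepsilon}(z_i-z_j)^\top(\dot z_i-\dot z_j).
\]
Since $\|\dot z_i-\dot z_j\|\le 2v_t$, Cauchy--Schwarz gives $|\dot a_{ij}|\le v_t\|z_i-z_j\|/\varepsilon$. The diagonal term $j=i$ contributes nothing, which is consistent with this bound.

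The $\ell_1$ norm of the derivative is then controlled by the triangle inequality:
\[
\sum_j P_{ij}\Bigl|\dot a_{ij}-\sum_rP_{ir}\dot a_{ir}\Bigr|
\le
\sum_jP_{ij}|\dot a_{ij}|+\Bigl|\sum_rP_{ir}\dot a_{ir}\Bigr|
\le
2\sum_jP_{ij}|\dot a_{ij}|.
\]
Combining this with the velocity bound gives at most $\frac{2v_t}{\varepsilon}\sum_jP_{ij}\|z_j-z_i\|$. Because $P_{i\cdot}$ is a probability vector, Jensen's (or Cauchy--Schwarz's) inequality yields $\sum_jP_{ij}\|z_j-z_i\|\le\bigl(\sum_jP_{ij}\|z_j-z_i\|^2\bigr)^{1/2}=\rho_\varepsilon(i;Z_t)$. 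This is the first claim.

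For the integrated statement, I apply the fundamental theorem of calculus coordinatewise,
\[
P_i(Z_1)-P_i(Z_0)=\int_0^1\frac{d}{dt}P_i(Z_t)\,dt,
\]
and then move the $\ell_1$ norm inside the integral by the triangle inequality for vector-valued integrals. The only point requiring care is regularity. I would assume $Z_t$ is $C^1$, or at least absolutely continuous, so that $v_t$ is integrable; smoothness of $Z\mapsto P$ on all of $(\mathbb R^d)^n$ then makes the composition absolutely continuous. No step is genuinely hard. The main thing to get right is the centring in the softmax derivative, since that is what produces the factor $2$ rather than a dependence on $D_i$.
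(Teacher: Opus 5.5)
Your proposal is correct and follows the paper's proof essentially step for step: the softmax derivative with centred log-weight velocities, the triangle inequality giving the factor $2$, the bound $|\dot a_{ij}|\le v_t\|z_i-z_j\|/\varepsilon$, and Cauchy--Schwarz (Jensen) to pass to $\rho_\varepsilon(i;Z_t)$. Your remark on regularity of the path for the integrated bound is a detail the paper leaves implicit.
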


\begin{proof}
Writing
\[
a_{ij}
=
\frac{d}{dt}\log W_{ij}
=
-\frac{
\langle z_i-z_j,\dot z_i-\dot z_j\rangle
}{2\varepsilon}
\]
gives
\[
\dot P_{ij}
=
P_{ij}
\left(
a_{ij}-\sum_rP_{ir}a_{ir}
\right).
\]
Hence
\[
\|\dot P_i\|_1
\le
2\sum_jP_{ij}|a_{ij}|.
\]
Since
\[
|a_{ij}|
\le
\frac{v_t}{\varepsilon}\|z_i-z_j\|,
\]
Cauchy--Schwarz yields the bound.
\end{proof}

\begin{theorem}[Uniform stability on bounded feature clouds]
\label{thm:operator-stability}
Fix $\varepsilon>0$ and $R>0$. If
$\max_i\|z_i\|\le R$ and $\max_i\|\widetilde z_i\|\le R$, then
\[
\|P_\varepsilon(Z)-P_\varepsilon(\widetilde Z)\|_{\infty\to\infty}
\le
\frac{4R}{\varepsilon}e^{R^2/\varepsilon}
\|Z-\widetilde Z\|_\infty.
\]
\end{theorem}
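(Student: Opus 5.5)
The plan is to deduce the uniform bound from Theorem~\ref{thm:local-operator-stability} along the straight-line path
\[
Z_t=(1-t)Z+t\widetilde Z,\qquad t\in[0,1].
\]
Since the ball of radius $R$ is convex, every $z_i(t)$ satisfies $\|z_i(t)\|\le R$. The velocity is $\dot z_i(t)=\widetilde z_i-z_i$, so $v_t=\|Z-\widetilde Z\|_\infty$, where $\|Z-\widetilde Z\|_\infty$ denotes $\max_i\|z_i-\widetilde z_i\|$. The operator norm $\|\cdot\|_{\infty\to\infty}$ is the maximum absolute row sum, so it suffices to bound $\|P_i(Z)-P_i(\widetilde Z)\|_1$ uniformly in $i$. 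Theorem~\ref{thm:local-operator-stability} reduces this to
\[
\frac{2}{\varepsilon}\,\|Z-\widetilde Z\|_\infty\,\sup_{t}\rho_\varepsilon(i;Z_t).
\]

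The remaining task is a uniform bound on the soft diffusion radius for a cloud in the $R$-ball. The crude bound $\rho_\varepsilon(i)\le\max_j\|z_j-z_i\|\le 2R$ already gives $\frac{4R}{\varepsilon}\|Z-\widetilde Z\|_\infty$. This is stronger than the stated estimate, since $e^{R^2/\varepsilon}\ge1$.

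The exponential factor probably comes from the authors' intended direct argument, which bounds $\partial P_{ij}/\partial z_k$ using $D_i\ge$ (number of points) $\cdot\, e^{-4R^2/(4\varepsilon)}$ together with $W_{ij}\le1$. As a cross-check, I would also sketch that route. Write
\[
P_i(Z)-P_i(\widetilde Z)=\frac{W_i-\widetilde W_i}{D_i}+\widetilde W_i\Bigl(\frac1{D_i}-\frac1{\widetilde D_i}\Bigr).
\]
Then bound $|W_{ij}-\widetilde W_{ij}|$ using the mean value theorem on $\exp(-\|\cdot\|^2/4\varepsilon)$, with gradient of size at most $\frac{2R}{2\varepsilon}$ times the increment $2\|Z-\widetilde Z\|_\infty$. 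Divide by $D_i\ge n e^{-R^2/\varepsilon}$, and note that the numerator sum over $j$ carries the same factor $n$. This produces a constant of the form $\frac{cR}{\varepsilon}e^{R^2/\varepsilon}$.

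I will present the path argument as the main proof, since it is cleaner and rigorous given the earlier theorem, and remark that it implies the stated (weaker) constant. The only points needing care are differentiability of $t\mapsto P_i(Z_t)$ and convexity of the ball; both are immediate because $W_{ij}>0$ is smooth. The main obstacle I anticipate is purely bookkeeping: matching the norm convention $\|Z-\widetilde Z\|_\infty$ (maximum pointwise Euclidean displacement) with $v_t$. If the paper instead means a coordinatewise sup norm, an extra $\sqrt d$ would appear, and I would state the convention explicitly.
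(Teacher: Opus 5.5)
Your proof is correct, but it takes a different route from the paper's. The paper does not use Theorem~\ref{thm:local-operator-stability}. It argues directly in three steps:
\begin{itemize}
\item it bounds $|W_{ij}(Z)-W_{ij}(\widetilde Z)|\le\frac{2R}{\varepsilon}\eta$ entrywise by the mean value theorem;
\item it lower-bounds every degree by $D_i\ge n e^{-R^2/\varepsilon}$;
\item it applies the normalization estimate $\sum_j|w_j/D-\widetilde w_j/\widetilde D|\le 2\Delta_i/D$ with $\Delta_i\le n\frac{2R}{\varepsilon}\eta$.
\end{itemize}
That is exactly your cross-check sketch, constant $4$ included, so you correctly identified where $e^{R^2/\varepsilon}$ comes from.

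Your path argument instead integrates the logarithmic derivative $\dot P_{ij}=P_{ij}(a_{ij}-\sum_rP_{ir}a_{ir})$. This controls relative rather than absolute changes of the kernel, so the normalization costs nothing and no degree lower bound is needed. Convexity of the ball and $\rho_\varepsilon(i;Z_t)\le 2R$ then give the sharper constant $\frac{4R}{\varepsilon}$, which implies the stated one.

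Your norm worry does not arise. The paper defines $\|Z-\widetilde Z\|_\infty=\max_i\|z_i-\widetilde z_i\|$, so $v_t=\eta$ exactly and no $\sqrt d$ appears.

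Each route has its advantage. The paper's proof is purely algebraic, with no differentiability or path, and its normalization inequality is reused verbatim for the self-loop-free operator in Appendix~\ref{app:no-self-loop-transport}. Your proof gives a Lipschitz constant that grows only like $1/\varepsilon$ rather than like $e^{R^2/\varepsilon}/\varepsilon$. At small bandwidth, that is the difference between a usable bound and an essentially vacuous one.
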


The proof is in Appendix~\ref{app:stability}. Linear aggregation then gives
the following consequence.

\begin{corollary}[Stability of snapshot summaries]
\label{cor:observable-stability}
Under the assumptions of Theorem~\ref{thm:operator-stability},
$T^{\mathrm{agg}}(P)$, $\mathcal L_{\mathrm{raw}}(P)$, label-boundary
energies, and squared soft diffusion radii are Lipschitz functions of $Z$ for
fixed labels. The same conclusion holds for
$T^{\mathrm{agg}}(\widetilde P)$ when $n>1$, since the bounded-cloud
assumptions give a uniform positive lower bound on every off-diagonal row
mass.
\end{corollary}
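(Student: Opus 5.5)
The plan is to derive each claim from Theorem~\ref{thm:operator-stability}. That theorem states that $Z\mapsto P_\varepsilon(Z)$ is Lipschitz from $\|\cdot\|_\infty$ on feature clouds to the $\infty\to\infty$ operator norm, with constant $C_{R,\varepsilon}=\frac{4R}{\varepsilon}e^{R^2/\varepsilon}$. The $\infty\to\infty$ norm is the maximal row $\ell^1$ norm. Hence every row satisfies $\|P_i(Z)-P_i(\widetilde Z)\|_1\le C_{R,\varepsilon}\|Z-\widetilde Z\|_\infty$. Each observable is then written as a function of the rows $P_i$, possibly together with the coordinates $z_i$, and the row bound is propagated through that function.

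\textbf{Linear summaries.} For fixed labels, $B_i^P(b)$ is a sum of entries of row $i$, so $\|B_i^P-B_i^{P'}\|_1\le\|P_i-P'_i\|_1$. Averaging over source classes gives $\max_a\|T^{\mathrm{agg}}_{a\cdot}(P)-T^{\mathrm{agg}}_{a\cdot}(P')\|_1\le C_{R,\varepsilon}\|Z-\widetilde Z\|_\infty$. Raw leakage is $\mathcal L_{\mathrm{raw}}=\frac1n\sum_i(1-B_i^P(y_i))$, so it inherits the same constant. The label-boundary energies are $\Gamma(g_a)(i)=\frac1{2\varepsilon}\sum_jP_{ij}(g_a(j)-g_a(i))^2$. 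For fixed labels these are linear in $P_i$ with coefficients in $\{0,1/(2\varepsilon)\}$, so they are Lipschitz with constant $C_{R,\varepsilon}/(2\varepsilon)$. The same holds for their class sums and for $\mathcal E_{\mathrm{label}}$.

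\textbf{Soft radii.} Here $\rho^2_\varepsilon(i)=\sum_jP_{ij}\|z_j-z_i\|^2$ depends on $Z$ through both $P$ and the distances. We split the difference as
\[
\sum_j(P_{ij}-P'_{ij})\|z_j-z_i\|^2+\sum_jP'_{ij}\bigl(\|z_j-z_i\|^2-\|\widetilde z_j-\widetilde z_i\|^2\bigr).
\]
The first term is at most $4R^2\|P_i-P'_i\|_1$. In the second term, $|\,\|u\|^2-\|v\|^2|\le\|u-v\|(\|u\|+\|v\|)\le 2\|Z-\widetilde Z\|_\infty\cdot4R$, and the weights $P'_{ij}$ sum to one. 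The total Lipschitz constant is therefore $4R^2C_{R,\varepsilon}+8R$.

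\textbf{The diagonal-removed chain.} This is the main obstacle, because renormalization is a nonlinear operation. On a bounded cloud, $W_{ij}\ge e^{-(2R)^2/(4\varepsilon)}=e^{-R^2/\varepsilon}$ and $D_i\le n$. Hence every off-diagonal mass satisfies $m_i=\sum_{r\ne i}P_{ir}\ge (n-1)e^{-R^2/\varepsilon}/n=:m_*>0$ when $n>1$. Alternatively, one can use $m_i=1-1/D_i\ge 1-1/(1+(n-1)e^{-R^2/\varepsilon})$, which is a better bound. Write $\widetilde P_i=u_i/m_i$, where $u_i$ is row $i$ with its diagonal entry zeroed, and $m_i=\|u_i\|_1$. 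Then
\[
\Bigl\|\frac{u}{m}-\frac{u'}{m'}\Bigr\|_1\le\frac{\|u-u'\|_1}{m}+\|u'\|_1\Bigl|\frac1m-\frac1{m'}\Bigr|\le\frac{2\|u-u'\|_1}{m_*}.
\]
Since $\|u_i-u'_i\|_1\le\|P_i-P'_i\|_1$, the rows of $\widetilde P$ are Lipschitz with constant $2C_{R,\varepsilon}/m_*$. The linear aggregation argument above then transfers this bound to $T^{\mathrm{agg}}(\widetilde P)$.
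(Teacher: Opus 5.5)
Your proposal is correct and follows essentially the same route as the paper. Both arguments bound the linear statistics ($T^{\mathrm{agg}}$, raw leakage, boundary energies) through the row-sum estimate of Theorem~\ref{thm:operator-stability}, use the same add-and-subtract split that gives the constant $4R^2C_P+8R$ for the squared radius, and use the same factor-$2/\gamma$ renormalization estimate for the self-loop-removed rows. Your one addition is that you derive the uniform lower bound $m_*$ on the off-diagonal row mass explicitly from the bounded-cloud assumption, whereas the paper's appendix only works on clouds where $s_i(Z)\ge\gamma$ is assumed.
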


For hard neighborhoods, let $d_{i,(k)}$ be the $k$-th ordered distance and
define the margin
\[
\Delta_i(Z)=d_{i,(k+1)}(Z)-d_{i,(k)}(Z).
\]

\begin{proposition}[Finite-perturbation $k$-NN stability is margin controlled]
\label{prop:knn-margin}
If $\|Z-\widetilde Z\|_\infty\le\delta$ and $\Delta_i(Z)>4\delta$, then the
directed $k$-nearest-neighbor set of point $i$ is unchanged. Any row whose directed neighbor set changes between $Z$ and
$\widetilde Z$ must belong to
\[
\{i:\Delta_i(Z)\le4\delta\}.
\]
Consequently, the number of changed rows is at most
\[
\#\{i:\Delta_i(Z)\le4\delta\},
\]
and the directed edge-set symmetric difference is at most
\[
2k\,\#\{i:\Delta_i(Z)\le4\delta\}.
\]
\end{proposition}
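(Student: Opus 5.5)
The plan is to show that, under the margin condition, every point moves its distances to point $i$ by at most $2\delta$. Then no neighbor-order swap can occur across the gap between the $k$-th and $(k+1)$-th distances. Throughout, $\|Z-\widetilde Z\|_\infty$ means $\max_j\|z_j-\widetilde z_j\|$, and ties are assumed broken consistently. The neighbor set is well defined whenever $\Delta_i>0$.

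\textbf{Step 1 (distance perturbation).} For every $j$, the triangle inequality gives
\[
\bigl|\|\widetilde z_i-\widetilde z_j\|-\|z_i-z_j\|\bigr|\le\|z_i-\widetilde z_i\|+\|z_j-\widetilde z_j\|\le2\delta .
\]
Write $d_{ij}=\|z_i-z_j\|$ and $\widetilde d_{ij}=\|\widetilde z_i-\widetilde z_j\|$.

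\textbf{Step 2 (order preservation across the gap).} Let $N_i$ be the directed $k$-nearest-neighbor set of $i$ under $Z$. Then $d_{ij}\le d_{i,(k)}$ for $j\in N_i$, and $d_{ij'}\ge d_{i,(k+1)}=d_{i,(k)}+\Delta_i$ for $j'\notin N_i\cup\{i\}$. By Step 1,
\[
\widetilde d_{ij}\le d_{i,(k)}+2\delta<d_{i,(k)}+\Delta_i-2\delta\le\widetilde d_{ij'} ,
\]
where the strict inequality uses $\Delta_i>4\delta$. So every element of $N_i$ is strictly closer to $\widetilde z_i$ than every non-member. Hence the $k$ smallest perturbed distances are attained exactly on $N_i$, and the neighbor set is unchanged.

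\textbf{Step 3 (counting).} Taking the contrapositive of Step 2, any row whose neighbor set changes satisfies $\Delta_i(Z)\le4\delta$. This gives the set inclusion and the bound on the number of changed rows. A changed row $i$ has old and new neighbor sets of size $k$ each, so it contributes at most $2k$ edges to the symmetric difference (at most $k$ removed and at most $k$ added). Unchanged rows contribute nothing. Summing over changed rows gives the bound $2k\,\#\{i:\Delta_i(Z)\le4\delta\}$.

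\textbf{Main obstacle.} The only delicate point is the strict inequality and tie handling in Step 2. The strict margin $\Delta_i>4\delta$ is exactly what ensures that perturbed distances inside and outside $N_i$ remain separated. Without it, ties in $\widetilde Z$ could make the perturbed neighbor set ambiguous, so I would state explicitly that strictness excludes such ties at the boundary.
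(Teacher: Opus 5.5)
Your proposal is correct and follows the paper's argument. The triangle inequality bounds each distance change by $2\delta$, and then $\Delta_i(Z)>4\delta$ keeps every original neighbor strictly closer than every non-neighbor. Your contrapositive and the count of at most $2k$ edges per changed row spell out consequences that the paper's three-line proof leaves implicit.
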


\begin{proof}
Every pairwise distance changes by at most $2\delta$. Original top-$k$
distances are at most $d_{i,(k)}+2\delta$, while excluded distances are at
least $d_{i,(k+1)}-2\delta$. The ordering is preserved when the gap exceeds
$4\delta$.
\end{proof}

\begin{proposition}[Discontinuity at neighbor ties]
\label{prop:knn-discontinuity}
For $1\le k<n-1$, the map from a feature cloud to its $k$-NN adjacency is
discontinuous at any cloud where a $k$-th and $(k+1)$-st neighbor distance
tie. Any adjacency-dependent observable that distinguishes the two resulting
graphs inherits this discontinuity.
\end{proposition}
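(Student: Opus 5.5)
The plan is to exhibit, at any tie cloud $Z^\ast$, two perturbation families converging to $Z^\ast$ whose $k$-NN adjacencies differ, while the adjacency is locally constant along each family. Fix a cloud $Z^\ast$ and an index $i$ with $d_{i,(k)}(Z^\ast)=d_{i,(k+1)}(Z^\ast)$. Choose points $j\ne l$ (both distinct from $i$) attaining these two ordered distances, so $\|z_i^\ast-z_j^\ast\|=\|z_i^\ast-z_l^\ast\|=d$. The hypothesis $1\le k<n-1$ ensures that the $k$-th and $(k+1)$-st neighbors exist among the $n-1$ other points; it also rules out the degenerate case where all other points are neighbors, since then the adjacency is constant. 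The tie means that at $Z^\ast$ itself the $k$-NN set is determined only by a tie-breaking rule. Whatever rule is used, at least one of $j,l$ is excluded; say $l$.

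The construction proceeds as follows. For $\eta>0$, define $Z^\eta$ by moving only $z_l$ radially toward $z_i$:
\[
z_l^\eta=z_l^\ast-\eta\,\frac{z_l^\ast-z_i^\ast}{\|z_l^\ast-z_i^\ast\|},
\]
which strictly decreases $\|z_i-z_l\|$ by $\eta$. This requires $d>0$; if $d=0$, I would instead move $z_j$ slightly away to break the tie, and argue symmetrically.

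Next I identify the neighbor set of $i$ in $Z^\eta$. Every other distance from $i$ is unchanged. Points strictly closer than $d$ remain among the top $k$, and points strictly farther remain excluded. If several points share distance $d$, I would pick $l$ among the excluded ones and note that, for small $\eta$, $l$ becomes strictly closer than every point tied at distance $d$. Hence $l$ enters the top-$k$ set and some tied point leaves it. The perturbation changes only row $i$'s candidate set among the tied points, and the distances of $l$ to other points $m$ change by at most $\eta$. So for small $\eta$ the other rows either keep their sets or change in ways that do not matter: it suffices to show that row $i$ differs from its value at $Z^\ast$.

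Since $\|Z^\eta-Z^\ast\|_\infty=\eta\to0$ while the adjacency row of $i$ at $Z^\eta$ contains the edge $(i,l)$ absent at $Z^\ast$, the $k$-NN map is not continuous at $Z^\ast$: the adjacency lives in a discrete set, where continuity means eventual constancy. To avoid depending on the tie-breaking convention, I would also use the symmetric sequence that pushes $z_l$ outward. Along it the adjacency contains $(i,j)$ and not $(i,l)$, or more generally has $l$ excluded. The two one-sided limits then give distinct graphs, which yields discontinuity regardless of the value assigned at $Z^\ast$.

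For the observable claim, let $F$ be any function of the adjacency that takes different values on the two limiting graphs $G^+$ and $G^-$. Then $F(\text{adj}(Z^{\pm\eta}))$ is constant equal to $F(G^\pm)$ for small $\eta$, so the two one-sided limits of $F$ differ and $F\circ\text{adj}$ is discontinuous at $Z^\ast$.

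The main obstacle is the bookkeeping when more than two points tie at distance $d$, or when moving $z_l$ alters other rows' neighbor sets. I would handle this by working only with row $i$ and using two-sided perturbations, so that the conclusion never depends on the tie-breaking convention or on the other rows.
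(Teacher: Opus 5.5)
Your proposal is correct and follows the paper's route: an arbitrarily small displacement of an excluded tied candidate toward $z_i$ forces it into row $i$'s top-$k$ set, so an adjacency entry jumps, and the observable claim follows by composition. Your bookkeeping is more general than the paper's, which treats only the case of exactly $k-1$ strictly closer points, and the inward/outward pair removes dependence on the tie-breaking value at $Z^\ast$; two small things need tightening. When $d=0$ the point you push away must be an included tied candidate, which your $j$ need not be in a multi-way tie. The claim that the whole adjacency is eventually constant along each ray also deserves a line: every difference of two squared distances along $z_l^\ast\pm\eta u$ is a polynomial of degree at most two in $\eta$, so under a fixed index-based tie-breaking rule the adjacency is constant for all small $\eta>0$.
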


The two results describe different mechanisms. Operator rows move in total
variation, whereas hard neighborhoods change through edge replacements. The
experiments report both mechanisms and also compare class transport with the
same total-variation statistic.

\section{Empirical evaluation}
\label{sec:eval}

\subsection{Protocol}
\label{sec:cifar-protocol}

We train three CIFAR-adapted ResNet-18 models for each dataset, and
obtain mean test accuracies of $95.30\pm0.37\%$ on CIFAR-10 and
$78.64\pm0.13\%$ on CIFAR-100. We extract features from the stem and the four
residual stages, apply global average pooling, and construct snapshots containing 100 examples per class for CIFAR-10 and 20
examples per class for CIFAR-100.

For each layer and subset we define
$\varepsilon_{\mathrm{med}}$ as one quarter of that snapshot's median
pairwise squared distance and evaluate
\[
\varepsilon/\varepsilon_{\mathrm{med}}
\in\{0.25,0.5,1,2\}.
\]
We refer to $0.25\varepsilon_{\mathrm{med}}$ as the finest tested bandwidth.
Because the reference bandwidth is specific to each snapshot, the depthwise results
describe class organization relative to each snapshot's global pairwise
distance scale. Features are centered before the operator is
built, which is numerically convenient and leaves every pairwise Euclidean
distance unchanged. A directed 20-nearest-neighbor graph serves as the hard
baseline, and all multiscale measurements are computed on the same fixed
balanced subset, so we can compare changes across bandwidth, normalization,
and depth. For finest-bandwidth stem and layer-4 results, the analysis is repeated
over five independently drawn balanced subsets for each trained model.
Perturbation curves use relative noise scales
$\{0.002,0.005,0.01,0.02,0.05\}$ and average ten noise draws per model at
each scale. The diffusion operator uses
$\varepsilon=0.25\varepsilon_{\mathrm{med}}$ and $\alpha=0$, while the hard
baseline is a directed $k$-nearest-neighbor graph with $k=20$. Bandwidth
is computed from the unperturbed layer-4 snapshot and held fixed after
perturbation.

At relative scale $\tau$, the perturbation is
\[
\xi_i
\sim
\mathcal N\!\left(
0,\frac{\tau^2\ell^2}{d}I_d
\right),
\qquad
\ell
=
\left(
\operatorname{median}_{r<s}\|z_r-z_s\|^2
\right)^{1/2}.
\]
Thus
\[
\left(\mathbb E\|\xi_i\|^2\right)^{1/2}=\tau\ell.
\]

\subsection{Multiscale class transport across depth}
\label{sec:cifar-transport}

Across both datasets, the class chains become progressively sharper with
depth. On the fixed CIFAR-10 subset, mean persistence at
$0.25\varepsilon_{\mathrm{med}}$ rises from $0.145\pm0.001$ in the stem to
$0.712\pm0.010$ at layer 4, and although broader kernels retain more
long-range transport, the same reorganization of the representation remains
visible throughout the bandwidth range
(Figure~\ref{fig:cifar-main}a).

The layer-4 transition matrix shows that the off-diagonal mass is structured
rather than diffuse. By this point in the network, most transport remains
within class, while the largest residual pathways connect semantically
related categories, most clearly cats with dogs and airplanes with ships,
with additional concentration among the other animal classes
(Figure~\ref{fig:cifar-main}b). The final representation therefore combines
strong class persistence with a smaller but still interpretable pattern of
cross-class transport.

Let $s(a)$ denote the semantic superclass of fine class $a$. For a class
transition matrix $T$, we define within-superclass off-diagonal enrichment by
\[
\operatorname{Enrich}_{\mathrm{sc}}(T)
=
\frac{1}{K}
\sum_{a=1}^K
\frac{
\displaystyle
\sum_{\substack{b\neq a\\s(b)=s(a)}}T_{ab}
}{
\displaystyle
(1-T_{aa})(4/99)
}.
\]
Thus the diagonal is removed before the remaining mass is normalized, and a
value of one is the uniform off-diagonal destination baseline.

CIFAR-100 exhibits the same depthwise sharpening at a finer semantic
resolution, with finest-bandwidth persistence increasing from
$0.0228\pm0.0002$ in the stem to $0.1355\pm0.0016$ at layer 4. At the same
time, the remaining off-diagonal mass becomes increasingly concentrated
within the dataset's 20 superclasses, as the within-superclass fraction rises
from $0.0543\pm0.0002$ to $0.0836\pm0.0010$. Relative to the
random-destination baseline of $4/99$, this corresponds to an enrichment
increase from $1.34\pm0.00$ to $2.07\pm0.03$, showing that the later
representations preserve a meaningful semantic organization even after the
diagonal mass is removed.

The same trends persist when the evaluation subset is resampled. Across the
15 model--subset pairs, CIFAR-10 persistence rises from
$0.1483\pm0.0035$ in the stem to $0.7178\pm0.0075$ at layer 4, while
CIFAR-100 rises from $0.02295\pm0.00072$ to $0.1368\pm0.0018$. At layer 4,
the mean within-model subset standard deviation is $0.0058$ on CIFAR-10 and
$0.0017$ on CIFAR-100, compared with standard deviations of $0.0061$ and
$0.0008$ across the model means, which shows that evaluation sampling
contributes materially to the reported variation and is especially
important on CIFAR-100.

\subsection{Degree heterogeneity and aggregation residuals}
\label{sec:empirical-degree-lumpability}

The behavior of the kernel degree differs sharply between the two datasets
at the finest bandwidth. On CIFAR-10, the mean within-class degree
coefficient of variation at $0.25\varepsilon_{\mathrm{med}}$ falls from
$0.598$ in the stem to $0.101$ at layer 4, so the final representation is
much more homogeneous at the local kernel scale. CIFAR-100 changes
far less over the same depth range, moving only from $0.631$ to $0.618$,
although its layer-4 value falls to $0.212$ once the bandwidth is increased
to $\varepsilon_{\mathrm{med}}$. The degree-heterogeneity bound is hence more favorable for
the final CIFAR-10 representation, compared to the finest-scale CIFAR-100 chain
which still carries substantial within-class degree variation.

The lumpability residual improves with both depth and bandwidth. At
$0.25\varepsilon_{\mathrm{med}}$, $R_{\mathrm{lump}}$ falls from $0.365$
to $0.224$ on CIFAR-10 and from $0.478$ to $0.256$ on CIFAR-100, while at
$\varepsilon_{\mathrm{med}}$ the corresponding layer-4 values decrease
further to $0.063$ and $0.038$. The label partition is consequently closer to one-step lumpability,
especially at the broader bandwidth, although the residual remains nonzero. Full results are
reported in Appendix~\ref{app:empirical-degree-lumpability}.

Density normalization is most influential in the same fine-scale regime.
Changing from $\alpha=0$ to $\alpha=1$ at
$0.25\varepsilon_{\mathrm{med}}$ shifts final-stage stationary leakage by
about $0.005$ on CIFAR-10, whereas on CIFAR-100 it changes the value from
$0.874$ to $0.770$. Once the bandwidth reaches
$\varepsilon_{\mathrm{med}}$, both differences fall to about $0.002$, which is
consistent with the reduction in degree heterogeneity at broader scales.

\begin{figure*}[t]
\centering
\includegraphics[width=\textwidth]{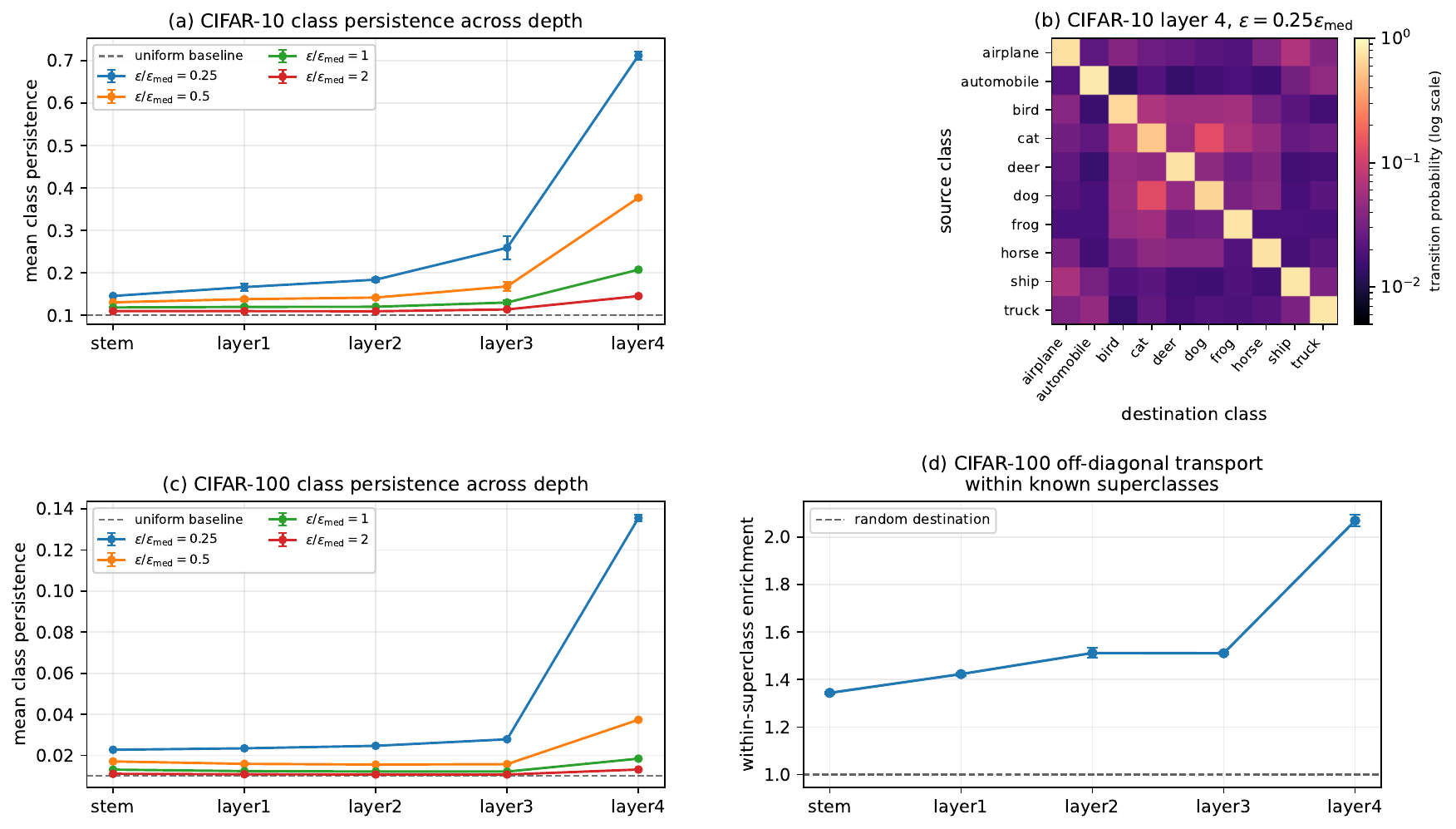}
\caption{\textbf{Multiscale class transport on residual networks.}
Error bars show one standard deviation across three independently trained
models evaluated on the same fixed balanced subset. (a) CIFAR-10 mean class
persistence across depth. (b) Mean CIFAR-10 layer-4 transition matrix at
$0.25\varepsilon_{\mathrm{med}}$ and $\alpha=0$, with class names shown on
both axes and transition probabilities displayed on a logarithmic colour
scale. (c) CIFAR-100 mean class persistence across depth. (d) Enrichment of
off-diagonal CIFAR-100 transport within the known semantic superclasses,
with one denoting the random-destination baseline.}
\label{fig:cifar-main}
\end{figure*}

\begin{table}[t]
\centering
\small
\setlength{\tabcolsep}{5pt}
\begin{tabular}{lcc}
\toprule
& CIFAR-10 & CIFAR-100\\
\midrule
Test accuracy & $95.30\pm0.37\%$ & $78.64\pm0.13\%$\\
Stem persistence$^{\dagger}$ & $0.1483\pm0.0035$ & $0.02295\pm0.00072$\\
Layer-4 persistence$^{\dagger}$ & $0.7178\pm0.0075$ & $0.1368\pm0.0018$\\
Layer-4 persistence / $1/K$$^{\dagger}$ & $7.18\times$ & $13.68\times$\\
Superclass enrichment, stem & -- & $1.344\pm0.004$\\
Superclass enrichment, layer 4 & -- & $2.069\pm0.025$\\
\bottomrule
\end{tabular}
\caption{Summary at $0.25\varepsilon_{\mathrm{med}}$ and $\alpha=0$.
$^{\dagger}$Persistence is computed over three trained models and five
balanced evaluation subsets per model, with the reported standard deviation
taken over the resulting 15 model--subset pairs. Test accuracy and
superclass enrichment report variation across the three trained models, and
superclass enrichment is computed on the fixed subset used in
Figure~\ref{fig:cifar-main}.}
\label{tab:cifar-summary}
\end{table}

\subsection{Perturbation response}
\label{sec:cifar-stability}

For the tested finest-bandwidth diffusion operator and directed $k=20$
baseline, the diffusion class chain changes less under every applied
perturbation when both are measured by the same total-variation statistic.
The perturbations and fixed-bandwidth convention are specified in
Section~\ref{sec:cifar-protocol}. Across the full noise range, the directed
$k$-NN chain moves between $4.2$ and $6.8$ times more on CIFAR-10 and between
$27.6$ and $40.4$ times more on CIFAR-100.

At the largest perturbation, mean total variation on CIFAR-10 is
$1.78\times10^{-4}$ for diffusion and $7.52\times10^{-4}$ for $k$-NN,
while the corresponding CIFAR-100 values are $4.01\times10^{-4}$ and
$1.11\times10^{-2}$. The complete perturbation curves appear in
Appendix~\ref{app:matched-stability}.

The mechanism-specific diagnostics separate the two forms of change.
Diffusion rows move continuously as the features are perturbed, whereas the
directed graph changes through discrete neighbor replacements.

\section{Discussion}
\label{sec:discussion}

This paper develops a diffusion-operator view of feedforward neural
representation geometry. Each feature snapshot is represented by a
Gaussian-kernel Markov operator whose one-step transport between classes
records how strongly class structure has formed and which relations remain
after separation. Across CIFAR-10 and CIFAR-100 ResNet-18 representations the
resulting class chains become increasingly persistent with depth, while their
off-diagonal mass continues to follow recognizable semantic structure. The
population analysis clarifies how this class-level picture arises from the
underlying operator, and the shared-covariance Gaussian model gives a solvable
case in which regularized Mahalanobis separation determines the class
affinities, leakage, and coarse spectrum.

The framework is built around finite feedforward snapshots, but the
class-level description also comes with ways of checking how faithfully it
represents the operator beneath it. Degree heterogeneity controls the gap
between the exact population transition and the affinity-based overlap chain,
while the lumpability residual measures how much variation is lost when
sample-level transition rows are averaged by label. In the CIFAR experiments
these diagnostics generally improve with depth and with a wider bandwidth,
although CIFAR-100 retains noticeable heterogeneity at the finest scales.
At fixed bandwidth, nearby feature clouds induce nearby diffusion operators
and class summaries. For the tested finest-bandwidth diffusion operator and
directed $k=20$ baseline, the diffusion class chain changes less under the
applied perturbations. Hard neighborhoods remain tied to neighbor-order
margins and may rewire abruptly when one of those margins closes. A natural
next step is to track these operators throughout training, rather than only
across depth in fully trained networks.

\bibliography{references}

@article{baptista2024,
  author  = {Baptista, Anthony and Barp, Alessandro and Chakraborti, Tapabrata
             and Harbron, Chris and MacArthur, Ben D. and Banerji, Christopher R. S.},
  title   = {Deep learning as {R}icci flow},
  journal = {Scientific Reports},
  volume  = {14},
  number  = {1},
  pages   = {23383},
  year    = {2024},
  doi     = {10.1038/s41598-024-74045-9},
}

@misc{khandait2026layers,
  title         = {From Layers to Networks: Comparing Neural Representations
                   via Diffusion Geometry},
  author        = {Khandait, Atharva and Gerken, Jan E.},
  year          = {2026},
  eprint        = {2605.15901},
  archivePrefix = {arXiv},
  primaryClass  = {cs.LG}
}

@inproceedings{hehl2025,
  author    = {Hehl, Moritz and von Renesse, Max-K. and Weber, Melanie},
  title     = {Neural Feature Geometry Evolves as Discrete {R}icci Flow},
  booktitle = {International Conference on Machine Learning (ICML)},
  year      = {2026},
  note      = {Spotlight},
  url       = {https://openreview.net/forum?id=YPH5yCKzYr},
}

@article{ni2019,
  author  = {Ni, Chien-Chun and Lin, Yu-Yao and Luo, Feng and Gao, Jie},
  title   = {Community Detection on Networks with {R}icci Flow},
  journal = {Scientific Reports},
  volume  = {9},
  number  = {1},
  pages   = {9984},
  year    = {2019},
  doi     = {10.1038/s41598-019-46380-9},
}

@article{gretton2012,
  author  = {Gretton, Arthur and Borgwardt, Karsten M. and Rasch, Malte J.
             and Sch{\"o}lkopf, Bernhard and Smola, Alexander},
  title   = {A Kernel Two-Sample Test},
  journal = {Journal of Machine Learning Research},
  volume  = {13},
  pages   = {723--773},
  year    = {2012},
}

@article{moon2019,
  author  = {Moon, Kevin R. and van Dijk, David and Wang, Zheng and Gigante, Scott
             and Burkhardt, Daniel B. and Chen, William S. and Yim, Kristina
             and van den Elzen, Antonia and Hirn, Matthew J. and Coifman, Ronald R.
             and Ivanova, Natalia B. and Wolf, Guy and Krishnaswamy, Smita},
  title   = {Visualizing structure and transitions in high-dimensional biological data},
  journal = {Nature Biotechnology},
  volume  = {37},
  number  = {12},
  pages   = {1482--1492},
  year    = {2019},
  doi     = {10.1038/s41587-019-0336-3},
}

@article{ollivier2009,
  author  = {Ollivier, Yann},
  title   = {{R}icci curvature of {M}arkov chains on metric spaces},
  journal = {Journal of Functional Analysis},
  volume  = {256},
  number  = {3},
  pages   = {810--864},
  year    = {2009},
  doi     = {10.1016/j.jfa.2008.11.001},
}

@article{forman2003,
  author  = {Forman, Robin},
  title   = {{B}ochner's method for cell complexes and combinatorial {R}icci curvature},
  journal = {Discrete \& Computational Geometry},
  volume  = {29},
  pages   = {323--374},
  year    = {2003},
  doi     = {10.1007/s00454-002-0743-x},
}

@article{linluyau2011,
  author  = {Lin, Yong and Lu, Linyuan and Yau, Shing-Tung},
  title   = {{R}icci curvature of graphs},
  journal = {Tohoku Mathematical Journal},
  volume  = {63},
  number  = {4},
  pages   = {605--627},
  year    = {2011},
  doi     = {10.2748/tmj/1325886283},
}

@article{garciatrillos2023,
  author  = {Garc{\'\i}a Trillos, Nicol{\'a}s and Weber, Melanie},
  title   = {Continuum limits of {O}llivier's {R}icci curvature on data clouds:
             pointwise consistency and global lower bounds},
  journal = {Discrete \& Computational Geometry},
  year    = {2026},
  doi     = {10.1007/s00454-026-00839-5},
  note    = {arXiv:2307.02378},
}

@article{coifman2006,
  author  = {Coifman, Ronald R. and Lafon, St{\'e}phane},
  title   = {Diffusion maps},
  journal = {Applied and Computational Harmonic Analysis},
  volume  = {21},
  number  = {1},
  pages   = {5--30},
  year    = {2006},
  doi     = {10.1016/j.acha.2006.04.006},
}

@article{coifman2005pnas,
  author  = {Coifman, Ronald R. and Lafon, St{\'e}phane and Lee, Ann B.
             and Maggioni, Mauro and Nadler, Boaz and Warner, Frederick
             and Zucker, Steven W.},
  title   = {Geometric diffusions as a tool for harmonic analysis and structure
             definition of data: diffusion maps},
  journal = {Proceedings of the National Academy of Sciences},
  volume  = {102},
  number  = {21},
  pages   = {7426--7431},
  year    = {2005},
  doi     = {10.1073/pnas.0500334102},
}

@article{singer2006,
  author  = {Singer, Amit},
  title   = {From graph to manifold {L}aplacian: the convergence rate},
  journal = {Applied and Computational Harmonic Analysis},
  volume  = {21},
  number  = {1},
  pages   = {128--134},
  year    = {2006},
  doi     = {10.1016/j.acha.2006.03.004},
}

@inproceedings{hein2005,
  author    = {Hein, Matthias and Audibert, Jean-Yves and von Luxburg, Ulrike},
  title     = {From graphs to manifolds: weak and strong pointwise consistency of
               graph {L}aplacians},
  booktitle = {Conference on Learning Theory (COLT)},
  pages     = {470--485},
  year      = {2005},
  doi       = {10.1007/11503415_32},
}

@article{vonluxburg2007,
  author  = {von Luxburg, Ulrike},
  title   = {A tutorial on spectral clustering},
  journal = {Statistics and Computing},
  volume  = {17},
  number  = {4},
  pages   = {395--416},
  year    = {2007},
  doi     = {10.1007/s11222-007-9033-z},
}

@book{bakry2014,
  author    = {Bakry, Dominique and Gentil, Ivan and Ledoux, Michel},
  title     = {Analysis and Geometry of {M}arkov Diffusion Operators},
  series    = {Grundlehren der mathematischen Wissenschaften},
  volume    = {348},
  publisher = {Springer},
  year      = {2014},
  doi       = {10.1007/978-3-319-00227-9},
}

@article{jones2024,
  author  = {Jones, Iolo},
  title   = {Diffusion Geometry},
  journal = {arXiv preprint arXiv:2405.10858},
  year    = {2024},
}

@article{jones2026,
  author  = {Jones, Iolo and Lanners, David},
  title   = {Computing Diffusion Geometry},
  journal = {arXiv preprint arXiv:2602.06006},
  year    = {2026},
}

@inproceedings{ansuini2019,
  author    = {Ansuini, Alessio and Laio, Alessandro and Macke, Jakob H. and Zoccolan, Davide},
  title     = {Intrinsic dimension of data representations in deep neural networks},
  booktitle = {Advances in Neural Information Processing Systems},
  year      = {2019},
}

@article{naitzat2020,
  author  = {Naitzat, Gregory and Zhitnikov, Andrey and Lim, Lek-Heng},
  title   = {Topology of deep neural networks},
  journal = {Journal of Machine Learning Research},
  volume  = {21},
  number  = {184},
  pages   = {1--40},
  year    = {2020},
}

@article{papyan2020,
  author  = {Papyan, Vardan and Han, X. Y. and Donoho, David L.},
  title   = {Prevalence of neural collapse during the terminal phase of deep learning training},
  journal = {Proceedings of the National Academy of Sciences},
  volume  = {117},
  number  = {40},
  pages   = {24652--24663},
  year    = {2020},
  doi     = {10.1073/pnas.2015509117},
}

@inproceedings{zhu2021,
  author    = {Zhu, Zhihui and Ding, Tianyu and Zhou, Jinxin and Li, Xiao
               and You, Chong and Sulam, Jeremias and Qu, Qing},
  title     = {A geometric analysis of neural collapse with unconstrained features},
  booktitle = {Advances in Neural Information Processing Systems},
  year      = {2021},
}

@inproceedings{kornblith2019,
  author    = {Kornblith, Simon and Norouzi, Mohammad and Lee, Honglak and Hinton, Geoffrey},
  title     = {Similarity of neural network representations revisited},
  booktitle = {International Conference on Machine Learning},
  year      = {2019},
}

@inproceedings{alain2017,
  author    = {Alain, Guillaume and Bengio, Yoshua},
  title     = {Understanding intermediate layers using linear classifier probes},
  booktitle = {ICLR Workshop Track},
  year      = {2017},
}

@inproceedings{jacot2018,
  author    = {Jacot, Arthur and Gabriel, Franck and Hongler, Cl{\'e}ment},
  title     = {Neural tangent kernel: convergence and generalization in neural networks},
  booktitle = {Advances in Neural Information Processing Systems},
  year      = {2018},
}

@inproceedings{lee2018,
  author    = {Lee, Jaehoon and Bahri, Yasaman and Novak, Roman and Schoenholz, Samuel S.
               and Pennington, Jeffrey and Sohl-Dickstein, Jascha},
  title     = {Deep neural networks as {G}aussian processes},
  booktitle = {International Conference on Learning Representations},
  year      = {2018},
}

@book{villani2008,
  author    = {Villani, C{\'e}dric},
  title     = {Optimal Transport: Old and New},
  series    = {Grundlehren der mathematischen Wissenschaften},
  volume    = {338},
  publisher = {Springer},
  year      = {2008},
  doi       = {10.1007/978-3-540-71050-9},
}

@article{calder2022lipschitz,
  author  = {Calder, Jeff and Garc{\'\i}a Trillos, Nicol{\'a}s and Lewicka, Marta},
  title   = {{L}ipschitz regularity of graph {L}aplacians on random data clouds},
  journal = {SIAM Journal on Mathematical Analysis},
  volume  = {54},
  number  = {1},
  pages   = {1169--1222},
  year    = {2022},
  doi     = {10.1137/20M1356610},
}

@inproceedings{ting2010,
  author    = {Ting, Daniel and Huang, Ling and Jordan, Michael I.},
  title     = {An analysis of the convergence of graph {L}aplacians},
  booktitle = {International Conference on Machine Learning (ICML)},
  year      = {2010},
}

@article{klus2018transfer,
  author  = {Klus, Stefan and N{\"u}ske, Feliks and Koltai, P{\'e}ter
             and Wu, Hao and Kevrekidis, Ioannis and Sch{\"u}tte, Christof
             and No{\'e}, Frank},
  title   = {Data-Driven Model Reduction and Transfer Operator Approximation},
  journal = {Journal of Nonlinear Science},
  volume  = {28},
  pages   = {985--1010},
  year    = {2018},
  doi     = {10.1007/s00332-017-9437-7}
}

@article{networkmanifold2024,
  title         = {Exploring the Manifold of Neural Networks Using Diffusion Geometry},
  author        = {Abel, Elliott and Steindl, Andrew J. and Mazioud, Selma and Schueler, Ellie and Ogundipe, Folu and Zhang, Ellen and Grinspan, Yvan and Reimann, Kristof and Crevasse, Peyton and Bhaskar, Dhananjay and Viswanath, Siddharth and Zhang, Yanlei and Rudner, Tim G. J. and Adelstein, Ian and Krishnaswamy, Smita},
  journal       = {arXiv preprint arXiv:2411.12626},
  year          = {2024},
  eprint        = {2411.12626},
  archivePrefix = {arXiv},
  primaryClass  = {cs.LG},
  doi           = {10.48550/arXiv.2411.12626}
}

@inproceedings{dse2023,
  title     = {Assessing Neural Network Representations During Training Using Noise-Resilient Diffusion Spectral Entropy},
  author    = {Liao, Danqi and Liu, Chen and Christensen, Benjamin W. and Tong, Alexander and Huguet, Guillaume and Wolf, Guy and Nickel, Maximilian and Adelstein, Ian and Krishnaswamy, Smita},
  booktitle = {ICML 2023 Workshop on Topology, Algebra, and Geometry in Machine Learning},
  year      = {2023},
  eprint    = {2312.04823},
  archivePrefix = {arXiv},
  primaryClass  = {cs.CV},
  doi       = {10.48550/arXiv.2312.04823}
}

@inproceedings{yanghu2021,
  author    = {Yang, Greg and Hu, Edward J.},
  title     = {Tensor Programs IV: Feature Learning in Infinite-Width Neural Networks},
  booktitle = {Proceedings of the 38th International Conference on Machine Learning},
  series    = {Proceedings of Machine Learning Research},
  volume    = {139},
  pages     = {11727--11737},
  year      = {2021},
  publisher = {PMLR}
}

@inproceedings{mendelman2025,
  author    = {Mendelman, Harel and Talmon, Ronen},
  title     = {Supervised and Semi-Supervised Diffusion Maps with Label-Driven Diffusion},
  booktitle = {International Conference on Learning Representations},
  year      = {2025},
  url       = {https://openreview.net/forum?id=G3B5ReApDw}
}

@article{muandet2017,
  author  = {Muandet, Krikamol and Fukumizu, Kenji and Sriperumbudur, Bharath K. and Sch{\"o}lkopf, Bernhard},
  title   = {Kernel Mean Embedding of Distributions: A Review and Beyond},
  journal = {Foundations and Trends in Machine Learning},
  volume  = {10},
  number  = {1--2},
  pages   = {1--141},
  year    = {2017},
  doi     = {10.1561/2200000060}
}

@article{jernigan2003,
  author  = {Jernigan, Robert W. and Baran, Robert H.},
  title   = {Testing Lumpability in Markov Chains},
  journal = {Statistics \& Probability Letters},
  volume  = {64},
  number  = {1},
  pages   = {17--23},
  year    = {2003},
  doi     = {10.1016/S0167-7152(03)00126-3}
}

@techreport{krizhevsky2009,
  author      = {Krizhevsky, Alex},
  title       = {Learning Multiple Layers of Features from Tiny Images},
  institution = {University of Toronto},
  year        = {2009}
}

@inproceedings{he2016,
  author    = {He, Kaiming and Zhang, Xiangyu and Ren, Shaoqing and Sun, Jian},
  title     = {Deep Residual Learning for Image Recognition},
  booktitle = {Proceedings of the IEEE Conference on Computer Vision and Pattern Recognition},
  pages     = {770--778},
  year      = {2016},
  doi       = {10.1109/CVPR.2016.90}
}

@book{kemeny1976,
  author    = {Kemeny, John G. and Snell, J. Laurie},
  title     = {Finite Markov Chains},
  series    = {Undergraduate Texts in Mathematics},
  publisher = {Springer-Verlag},
  address   = {New York},
  year      = {1976},
  isbn      = {978-0-387-90192-3}
}
\bibliographystyle{tmlr}

\appendix
\section{Proofs for the Gaussian affinity reduction}
\label{app:gaussian-bridge}

We now derive Theorem~\ref{thm:gaussian-bridge},
Proposition~\ref{prop:population-leakage}, and
Corollary~\ref{cor:homogeneous}, followed by the kernel-tilted local proxies in
Appendix~\ref{app:local-geometry}. 

Assume a balanced Gaussian snapshot model with shared covariance
\[
z\mid y=a \sim \mathcal N(\mu_a,\Sigma),
\qquad
a\in[K],
\qquad
\mathbb P(y=a)=\frac1K,
\]
and use the Gaussian kernel
\[
k_\varepsilon(u,v)
=
\exp\!\left(
-\frac{\|u-v\|^2}{4\varepsilon}
\right).
\]
For general class-conditionals \(\nu_a\), define
\[
\alpha_{ab}^{(\varepsilon)}
=
\mathbb E_{X\sim\nu_a,\;X'\sim\nu_b}
\left[
k_\varepsilon(X,X')
\right].
\]
The Gaussian calculation below is the special case in which these
population affinities can be written explicitly in terms of class means and
a shared covariance. In the balanced shared-covariance Gaussian model case,
\(\alpha_{ab}^{(\varepsilon)}=\alpha_0e^{-c_\varepsilon^{(a,b)}}\), so the
overlap chain has the closed form in Theorem~\ref{thm:gaussian-bridge}.
Proposition~\ref{prop:degree-heterogeneity} separately controls its distance
from the exact row-normalized population transition.

For two classes \(a,b\), write
\[
\Delta^{(a,b)}:=\mu_b-\mu_a
\]
and
\[
c_\varepsilon^{(a,b)}
=
\frac14
(\mu_a-\mu_b)^\top
(\varepsilon I+\Sigma)^{-1}
(\mu_a-\mu_b).
\]
Where \(c_\varepsilon^{(a,a)}=0\).

\subsection{Proof of Theorem~\ref{thm:gaussian-bridge}}
\label{app:proof-gaussian-bridge}

Let
\[
X_a\sim\mathcal N(\mu_a,\Sigma),
\qquad
X_b\sim\mathcal N(\mu_b,\Sigma)
\]
be independent. Then
\[
X_a-X_b
\sim
\mathcal N(\mu_a-\mu_b,2\Sigma).
\]
We need to compute
\[
\alpha_{ab}
=
\mathbb E
\exp\!\left(
-\frac{\|X_a-X_b\|^2}{4\varepsilon}
\right).
\]

Diagonalize \(\Sigma=U\Lambda U^\top\), where
\(\Lambda=\mathrm{diag}(\lambda_1,\dots,\lambda_d)\), and define
\[
\widetilde\Delta^{(a,b)}
=
U^\top(\mu_a-\mu_b).
\]
Since the kernel is rotation invariant, the expectation factorizes over
coordinates. If \(G\sim\mathcal N(m,s^2)\), then
\[
\mathbb E[e^{-tG^2}]
=
(1+2ts^2)^{-1/2}
\exp\!\left(
-\frac{tm^2}{1+2ts^2}
\right).
\]
Applying this with
\[
t=\frac{1}{4\varepsilon},
\qquad
s^2=2\lambda_r,
\qquad
m=\widetilde\Delta^{(a,b)}_r,
\]
gives
\[
\mathbb E
\exp\!\left(
-\frac{G^2}{4\varepsilon}
\right)
=
\left(1+\frac{\lambda_r}{\varepsilon}\right)^{-1/2}
\exp\!\left(
-\frac{(\widetilde\Delta^{(a,b)}_r)^2}
{4(\varepsilon+\lambda_r)}
\right).
\]
Multiplying over coordinates yields
\[
\alpha_{ab}
=
\det\!\left(I+\frac{\Sigma}{\varepsilon}\right)^{-1/2}
\exp\!\left(
-\frac14
(\mu_a-\mu_b)^\top
(\varepsilon I+\Sigma)^{-1}
(\mu_a-\mu_b)
\right).
\]
Thus
\[
\alpha_{ab}
=
\alpha_0
\exp\!\left(-c_\varepsilon^{(a,b)}\right),
\qquad
\alpha_0
=
\det\!\left(I+\frac{\Sigma}{\varepsilon}\right)^{-1/2}.
\]

Since \(\alpha_0\) is common to every pair of classes, it cancels under
row-normalization. Therefore the Gaussian overlap-chain transition matrix
is
\[
\bar P_{ab}
=
\frac{\alpha_{ab}}
{\sum_{r=1}^K \alpha_{ar}}
=
\frac{
\exp\!\left(-c_\varepsilon^{(a,b)}\right)
}{
\sum_{r=1}^K
\exp\!\left(-c_\varepsilon^{(a,r)}\right)
}.
\]

\paragraph{Special cases.}
If \(\Sigma=\sigma^2 I\), then
\[
c_\varepsilon^{(a,b)}
=
\frac{\|\mu_a-\mu_b\|^2}
{4(\varepsilon+\sigma^2)}.
\]
If \(\Sigma=\sigma^2 I+\beta vv^\top\), with \(\|v\|=1\), and
\[
\mu_a-\mu_b=\Delta_\parallel v+\Delta_\perp,
\qquad
\Delta_\perp\perp v,
\]
then
\[
c_\varepsilon^{(a,b)}
=
\frac{\Delta_\parallel^2}
{4(\varepsilon+\sigma^2+\beta)}
+
\frac{\|\Delta_\perp\|^2}
{4(\varepsilon+\sigma^2)}.
\]
If \(\Sigma\) is invertible and \(\varepsilon\to 0\), then
\[
c_\varepsilon^{(a,b)}
\to
\frac14
(\mu_a-\mu_b)^\top
\Sigma^{-1}
(\mu_a-\mu_b).
\]

\subsection{Stationary distribution and leakage}
\label{app:coarse-reversibility}
\label{app:proof-population-leakage}

Define
\[
w_{ab}
=
\exp\!\left(-c_\varepsilon^{(a,b)}\right),
\qquad
q_a=\sum_{r=1}^K w_{ar},
\qquad
\bar P_{ab}=\frac{w_{ab}}{q_a}.
\]
Since $w_{ab}=w_{ba}$, the overlap chain is reversible with stationary
distribution
\[
\bar\pi_a
=
\frac{q_a}{\sum_{s=1}^Kq_s},
\]
because
\[
\bar\pi_a\bar P_{ab}
=
\frac{w_{ab}}{\sum_s q_s}
=
\bar\pi_b\bar P_{ba}.
\]

Its stationary cross-class leakage is therefore
\[
\mathcal L_{\mathrm{stat}}(T^{\mathrm{ov}})
=
\sum_a\bar\pi_a\sum_{b\neq a}\bar P_{ab}
=
\frac{\sum_{a\neq b}w_{ab}}{\sum_s q_s}.
\]
Since $w_{aa}=1$,
\[
\sum_s q_s
=
K+\sum_{a\neq b}w_{ab},
\]
and hence
\[
\mathcal L_{\mathrm{stat}}(T^{\mathrm{ov}})
=
\frac{
\sum_{a\neq b}\exp\!\left(-c_\varepsilon^{(a,b)}\right)
}{
K+\sum_{a\neq b}\exp\!\left(-c_\varepsilon^{(a,b)}\right)
}.
\]
This proves Proposition~\ref{prop:population-leakage}.

\subsection{Proof of Corollary~\ref{cor:homogeneous}}
\label{app:proof-homogeneous}

Suppose
\[
c_\varepsilon^{(a,b)}=c
\qquad
\text{for every }a\neq b,
\]
and $t=e^{-c}$. The overlap chain here is
\[
\bar P
=
\frac{1-t}{1+(K-1)t}I
+
\frac{t}{1+(K-1)t}\mathbf 1\mathbf 1^\top.
\]
Its diagonal entry is $1/(1+(K-1)t)$, so the stationary leakage is
\[
\mathcal L_{\mathrm{stat}}(T^{\mathrm{ov}})
=
\frac{(K-1)t}{1+(K-1)t}
=
\frac{(K-1)e^{-c}}{1+(K-1)e^{-c}}.
\]

The vector $\mathbf 1$ has eigenvalue $1$, while every vector orthogonal to
$\mathbf 1$ has eigenvalue
\[
\lambda_2=\cdots=\lambda_K
=
\frac{1-t}{1+(K-1)t}
=
\frac{e^c-1}{e^c+K-1}.
\]
Hence the spectral gap is
\[
1-\lambda_2
=
\frac{Kt}{1+(K-1)t}
=
\frac{K}{e^c+K-1}.
\]

\subsection{Kernel-tilted local proxies}
\label{app:local-geometry}

Coarse class affinities, the overlap chain, and stationary leakage depend on
the Gaussian snapshot only through the scalars
\(c_\varepsilon^{(a,b)}\), whereas local operator geometry also retains the
directions of the covariance and class-mean differences. To make this
distinction explicit, let
\[
\delta=X_b-X_a,
\qquad
X_a\sim\mathcal N(\mu_a,\Sigma),
\qquad
X_b\sim\mathcal N(\mu_b,\Sigma),
\]
so that
\[
\delta\sim
\mathcal N\!\left(\Delta^{(a,b)},2\Sigma\right),
\qquad
\Delta^{(a,b)}=\mu_b-\mu_a.
\]
Reweighting this displacement by
\(\exp(-\|\delta\|^2/(4\varepsilon))\) gives a Gaussian tilted law with mean
and covariance
\[
m_\varepsilon^{(a,b)}
=
\left(I+\frac{\Sigma}{\varepsilon}\right)^{-1}
\Delta^{(a,b)},
\qquad
V_\varepsilon
=
2\Sigma
\left(I+\frac{\Sigma}{\varepsilon}\right)^{-1}.
\]

Define the overlap-averaged coordinate second-moment proxy for source class
$a$ by
\[
G_a^{\mathrm{ov}}
:=
\frac{1}{2\varepsilon\bar s_a}
\sum_{b=1}^K
\pi_b
\mathbb E\!\left[
k_\varepsilon(X_a,X_b)
(X_b-X_a)(X_b-X_a)^\top
\right].
\]
Under the shared-covariance Gaussian model, the tilted moments above give
\[
G_a^{\mathrm{ov}}
=
\frac{1}{2\varepsilon}V_\varepsilon
+
\frac{1}{2\varepsilon}
\sum_{b=1}^K
T^{\mathrm{ov}}_{ab}
m_\varepsilon^{(a,b)}
m_\varepsilon^{(a,b)\top},
\]
or equivalently
\[
G_a^{\mathrm{ov}}
=
\frac{1}{\varepsilon}
\Sigma
\left(I+\frac{\Sigma}{\varepsilon}\right)^{-1}
+
\frac{1}{2\varepsilon}
\sum_{b=1}^K
T^{\mathrm{ov}}_{ab}
m_\varepsilon^{(a,b)}
m_\varepsilon^{(a,b)\top}.
\]
Unlike the coarse class quantities, this proxy depends on the orientation of
each $\Delta^{(a,b)}$ and on the eigendirections of $\Sigma$.

The corresponding overlap-averaged squared-radius proxy is
\[
\left(\rho_{a,\varepsilon}^{\mathrm{ov}}\right)^2
:=
\frac{1}{\bar s_a}
\sum_{b=1}^K
\pi_b
\mathbb E\!\left[
k_\varepsilon(X_a,X_b)\|X_b-X_a\|^2
\right].
\]
It has the closed form
\[
\left(\rho_{a,\varepsilon}^{\mathrm{ov}}\right)^2
=
2\,\mathrm{tr}\!\left(
\Sigma
\left(I+\frac{\Sigma}{\varepsilon}\right)^{-1}
\right)
+
\sum_{b=1}^K
T^{\mathrm{ov}}_{ab}
\left\|
\left(I+\frac{\Sigma}{\varepsilon}\right)^{-1}
\Delta^{(a,b)}
\right\|^2.
\]

The exact class-averaged population operator quantities retain the
source-dependent denominator $s(X)$ inside the expectation. They coincide
with these overlap proxies when $s(X)=\bar s_a$ almost surely within source
class $a$. 

\subsection{Beyond the shared-covariance Gaussian model}
\label{app:beyond-gaussian}

Gaussianity is needed only for the closed-form formulas. The operator and
population definitions themselves apply to arbitrary class-conditional laws.

If classes have unequal covariances,
\[
z\mid y=a\sim\mathcal N(\mu_a,\Sigma_a),
\]
then kernel affinities remain closed form, since
\[
X_a-X_b
\sim
\mathcal N(\mu_a-\mu_b,\Sigma_a+\Sigma_b).
\]
In that case,
\[
\mathbb E[k_\varepsilon(X_a,X_b)]
=
\det\!\left(
I+\frac{\Sigma_a+\Sigma_b}{2\varepsilon}
\right)^{-1/2}
\exp\!\left(
-\frac{1}{4\varepsilon}
(\mu_a-\mu_b)^\top
\left(
I+\frac{\Sigma_a+\Sigma_b}{2\varepsilon}
\right)^{-1}
(\mu_a-\mu_b)
\right).
\]
Unlike the shared-covariance case, the determinant factor now depends on
\((a,b)\), so the affinity cannot be summarized by a single shared
multiplicative constant times \(e^{-c_\varepsilon^{(a,b)}}\).

For non-Gaussian or multimodal class-conditionals, the same operator
observables are still well-defined, but the reduction to means and
covariances need not hold. A useful population analogue of the Gaussian separation is the symmetric
operator-native quantity
\[
\widetilde c_\varepsilon^{(a,b)}
:=
-\log
\frac{
\mathbb E[k_\varepsilon(z,z')\mid y=a,\ y'=b]
}{
\sqrt{
\mathbb E[k_\varepsilon(z,z')\mid y=a,\ y'=a]\,
\mathbb E[k_\varepsilon(z,z')\mid y=b,\ y'=b]
}
}.
\]
In the shared-covariance Gaussian model, the within-class affinities are
equal and the cross-class affinity is
\(\alpha_{ab}=\alpha_0 e^{-c_\varepsilon^{(a,b)}}\). Hence
\[
\widetilde c_\varepsilon^{(a,b)}
=
-\log
\frac{\alpha_0 e^{-c_\varepsilon^{(a,b)}}}{\alpha_0}
=
c_\varepsilon^{(a,b)}.
\]
For non-Gaussian classes,
\(\widetilde c_\varepsilon^{(a,b)}\) still measures affinity in the kernel
geometry, even though no Mahalanobis reduction is available.

\section{Proofs for stability and hard-neighborhood discontinuity}
\label{app:stability}

We prove Theorem~\ref{thm:operator-stability},
Corollary~\ref{cor:observable-stability}, and
Proposition~\ref{prop:knn-discontinuity}. Write
\[
Z=(z_1,\dots,z_n)\in(\mathbb R^d)^n,
\qquad
\widetilde Z=(\widetilde z_1,\dots,\widetilde z_n)\in(\mathbb R^d)^n,
\]
and define
\[
\|Z-\widetilde Z\|_\infty
=
\max_{1\le i\le n}\|z_i-\widetilde z_i\|.
\]
For fixed \(\varepsilon>0\), define
\[
W_{ij}(Z)
=
\exp\!\left(
-\frac{\|z_i-z_j\|^2}{4\varepsilon}
\right),
\qquad
D_i(Z)=\sum_{j=1}^n W_{ij}(Z),
\qquad
P_{ij}(Z)=\frac{W_{ij}(Z)}{D_i(Z)}.
\]
For matrices we use the row-sum operator norm
\[
\|A\|_{\infty\to\infty}
=
\max_i\sum_j |A_{ij}|.
\]

\subsection{Distance and kernel perturbation bounds}
\label{app:distance-kernel-bounds}

\begin{lemma}[Distance and kernel perturbations]
\label{lem:distance-perturbation}
\label{lem:kernel-perturbation}
Assume
\[
\max_i\|z_i\|\le R,
\qquad
\max_i\|\widetilde z_i\|\le R,
\]
and let
\[
\eta=\|Z-\widetilde Z\|_\infty.
\]
Then, for every $i,j$,
\[
\left|
\|z_i-z_j\|^2
-
\|\widetilde z_i-\widetilde z_j\|^2
\right|
\le 8R\eta
\]
and
\[
|W_{ij}(Z)-W_{ij}(\widetilde Z)|
\le
\frac{2R}{\varepsilon}\eta.
\]
\end{lemma}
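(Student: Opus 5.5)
The plan is to prove the distance bound first and then deduce the kernel bound from it with a one-dimensional mean value argument applied to $u\mapsto e^{-u/(4\varepsilon)}$.

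\emph{Distance bound.} Write $u=z_i-z_j$ and $\widetilde u=\widetilde z_i-\widetilde z_j$. Since every point lies in the ball of radius $R$, we have $\|u\|,\|\widetilde u\|\le 2R$. The triangle inequality gives $\|u-\widetilde u\|\le\|z_i-\widetilde z_i\|+\|z_j-\widetilde z_j\|\le 2\eta$. We then factor
\[
\bigl|\|u\|^2-\|\widetilde u\|^2\bigr|
=
\bigl|\langle u-\widetilde u,\,u+\widetilde u\rangle\bigr|
\le
\|u-\widetilde u\|\,\bigl(\|u\|+\|\widetilde u\|\bigr)
\le
2\eta\cdot 4R=8R\eta .
\]
This is routine.

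\emph{Kernel bound.} Set $g(x)=e^{-x/(4\varepsilon)}$ on $x\ge0$. Its derivative satisfies $|g'(x)|=\frac{1}{4\varepsilon}e^{-x/(4\varepsilon)}\le\frac1{4\varepsilon}$, so $g$ is $\frac{1}{4\varepsilon}$-Lipschitz on $[0,\infty)$. Both squared distances lie in $[0,\infty)$, so the mean value theorem combined with the first bound gives
\[
|W_{ij}(Z)-W_{ij}(\widetilde Z)|
\le
\frac1{4\varepsilon}\cdot 8R\eta
=
\frac{2R}{\varepsilon}\eta .
\]
The diagonal case $i=j$ is trivial, since both sides vanish there.

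I do not expect a genuine obstacle. The only point needing care is to use the global Lipschitz constant $\frac{1}{4\varepsilon}$ of $g$ on the nonnegative half-line, rather than a pointwise derivative evaluated at the particular distances. Done this way, the constant comes out exactly as stated and no exponential factor enters here; the factor $e^{R^2/\varepsilon}$ in Theorem~\ref{thm:operator-stability} presumably arises only later, from the lower bound on the degrees $D_i$.
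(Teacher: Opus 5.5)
Your proof is correct and follows the paper's argument essentially verbatim: the difference-of-squares factorization with Cauchy--Schwarz gives $(4R)(2\eta)=8R\eta$, and the mean value theorem with the uniform derivative bound $1/(4\varepsilon)$ on $[0,\infty)$ then gives the kernel estimate. Your closing remark is also accurate: the factor $e^{R^2/\varepsilon}$ in Theorem~\ref{thm:operator-stability} comes from the degree lower bound $D_i\ge n e^{-R^2/\varepsilon}$ and not from this lemma.
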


\begin{proof}
Set
\[
a=z_i-z_j,
\qquad
b=\widetilde z_i-\widetilde z_j.
\]
Then
\[
\big|\|a\|^2-\|b\|^2\big|
=
|\langle a+b,a-b\rangle|
\le
\|a+b\|\,\|a-b\|
\le
(4R)(2\eta)
=
8R\eta.
\]
The derivative of $u\mapsto e^{-u/(4\varepsilon)}$ has absolute value at
most $1/(4\varepsilon)$ for $u\ge0$, so the mean-value theorem gives
\[
|W_{ij}(Z)-W_{ij}(\widetilde Z)|
\le
\frac{1}{4\varepsilon}(8R\eta)
=
\frac{2R}{\varepsilon}\eta.
\]
\end{proof}

\subsection{Proof of Theorem~\ref{thm:operator-stability}}
\label{app:proof-operator-stability}

Let
\[
\eta=\|Z-\widetilde Z\|_\infty,
\qquad
L_W=\frac{2R}{\varepsilon}.
\]
By Lemma~\ref{lem:kernel-perturbation},
\[
|W_{ij}(Z)-W_{ij}(\widetilde Z)|
\le
L_W\eta
\]
for every \(i,j\).

Because all points lie in the radius-\(R\) ball, every pairwise distance is
at most \(2R\). Hence
\[
W_{ij}(Z)\ge
\exp(-R^2/\varepsilon)
=: \beta,
\qquad
W_{ij}(\widetilde Z)\ge \beta.
\]
Consequently,
\[
D_i(Z)\ge n\beta,
\qquad
D_i(\widetilde Z)\ge n\beta.
\]

Fix a row \(i\), and write
\[
w_j=W_{ij}(Z),
\qquad
\widetilde w_j=W_{ij}(\widetilde Z),
\qquad
D=\sum_j w_j,
\qquad
\widetilde D=\sum_j \widetilde w_j.
\]
Let
\[
\Delta_i
=
\sum_{j=1}^n |w_j-\widetilde w_j|.
\]
Then
\[
\Delta_i\le nL_W\eta,
\qquad
|D-\widetilde D|\le \Delta_i.
\]
Now
\[
\begin{aligned}
\sum_j
\left|
\frac{w_j}{D}
-
\frac{\widetilde w_j}{\widetilde D}
\right|
&\le
\sum_j
\frac{|w_j-\widetilde w_j|}{D}
+
\sum_j
\widetilde w_j
\left|
\frac1D-\frac1{\widetilde D}
\right|  \\
&=
\frac{\Delta_i}{D}
+
\widetilde D
\frac{|D-\widetilde D|}{D\widetilde D} \\
&\le
\frac{2\Delta_i}{D}.
\end{aligned}
\]
Using \(D\ge n\beta\), we get
\[
\sum_j
|P_{ij}(Z)-P_{ij}(\widetilde Z)|
\le
\frac{2nL_W\eta}{n\beta}
=
\frac{2L_W}{\beta}\eta.
\]
Taking the maximum over rows yields
\[
\|P_\varepsilon(Z)-P_\varepsilon(\widetilde Z)\|_{\infty\to\infty}
\le
\frac{2L_W}{\beta}
\|Z-\widetilde Z\|_\infty.
\]
Substituting
\[
L_W=\frac{2R}{\varepsilon},
\qquad
\beta=e^{-R^2/\varepsilon},
\]
gives
\[
\|P_\varepsilon(Z)-P_\varepsilon(\widetilde Z)\|_{\infty\to\infty}
\le
\frac{4R}{\varepsilon}
\exp(R^2/\varepsilon)
\|Z-\widetilde Z\|_\infty.
\]
This proves Theorem~\ref{thm:operator-stability}. Since the entrywise
maximum norm is bounded by the row-sum norm, the same estimate also
controls individual entries.

\subsection{Proof of Corollary~\ref{cor:observable-stability}}
\label{app:proof-observable-stability}

Let
\[
\eta=\|Z-\widetilde Z\|_\infty,
\qquad
C_P
=
\frac{4R}{\varepsilon}\exp(R^2/\varepsilon).
\]
Theorem~\ref{thm:operator-stability} gives
\[
\|P(Z)-P(\widetilde Z)\|_{\infty\to\infty}
\le C_P\eta.
\]

Consider any statistic of the form
\[
\Phi(P)
=
\sum_i\omega_i\sum_j a_{ij}P_{ij},
\]
where $\omega_i\ge0$, $\sum_i\omega_i=1$, and
$|a_{ij}|\le M$. Then
\[
\begin{aligned}
|\Phi(P)-\Phi(\widetilde P)|
&\le
\sum_i\omega_i\sum_j
|a_{ij}|\,|P_{ij}-\widetilde P_{ij}| \\
&\le
M\|P-\widetilde P\|_{\infty\to\infty}
\le
MC_P\eta.
\end{aligned}
\]
The aggregate transition entry $T^{\mathrm{agg}}_{ab}$ and raw cross-class
leakage are obtained with $M=1$, and are therefore $C_P$-Lipschitz. For a
class indicator $g_a$, the boundary energy uses
\[
a_{ij}
=
\frac{(g_a(j)-g_a(i))^2}{2\varepsilon},
\]
giving Lipschitz constant $C_P/(2\varepsilon)$. The total one-hot
label-boundary energy satisfies
\[
\widehat{\mathcal E}_{\mathrm{label}}(Z)
=
\frac{1}{\varepsilon n}
\sum_i\sum_{j:y_j\neq y_i}P_{ij}(Z),
\]
and hence has Lipschitz constant $C_P/\varepsilon$.

For the squared soft diffusion radius,
\[
\widehat\rho_\varepsilon^2(i;Z)
=
\sum_jP_{ij}(Z)\|z_j-z_i\|^2,
\]
adding and subtracting
\(\sum_jP_{ij}(\widetilde Z)\|z_j-z_i\|^2\) gives
\[
\begin{aligned}
\left|
\widehat\rho_\varepsilon^2(i;Z)
-
\widehat\rho_\varepsilon^2(i;\widetilde Z)
\right|
&\le
\sum_j
|P_{ij}(Z)-P_{ij}(\widetilde Z)|
\|z_j-z_i\|^2 \\
&\quad+
\sum_jP_{ij}(\widetilde Z)
\left|
\|z_j-z_i\|^2
-
\|\widetilde z_j-\widetilde z_i\|^2
\right|.
\end{aligned}
\]
Since $\|z_j-z_i\|^2\le4R^2$, Lemma~\ref{lem:distance-perturbation} yields
\[
\left|
\widehat\rho_\varepsilon^2(i;Z)
-
\widehat\rho_\varepsilon^2(i;\widetilde Z)
\right|
\le
\left(4R^2C_P+8R\right)\eta.
\]
The root radius is locally Lipschitz wherever the squared radius is bounded
away from zero. Combining these bounds proves
Corollary~\ref{cor:observable-stability}.

\subsection{Removing self-loops in finite-sample transport}
\label{app:no-self-loop-transport}

The main theoretical operator \(P_\varepsilon(Z)\) includes self-loops
because \(W_{ii}=1\). In finite empirical class-transport calculations, we
also use row normalization without self-loops
\[
Q_{ij}(Z)
=
\frac{P_{ij}(Z)\mathbf 1\{i\neq j\}}
{\sum_{m\neq i}P_{im}(Z)}
\]
whenever the denominator is nonzero. This matches the convention used in
the empirical aggregate transition computations.

Let
\[
s_i(Z)=\sum_{m\neq i}P_{im}(Z).
\]
On any subset of feature clouds where
\[
s_i(Z)\ge \gamma>0
\qquad
\text{for all }i,
\]
the map \(P\mapsto Q\) is Lipschitz in row-sum norm. Indeed, for each row,
the same normalization argument used in the proof of
Theorem~\ref{thm:operator-stability} gives
\[
\sum_j |Q_{ij}(Z)-Q_{ij}(\widetilde Z)|
\le
\frac{2}{\gamma}
\sum_j |P_{ij}(Z)-P_{ij}(\widetilde Z)|.
\]
Therefore the aggregate transition (without self loops) and its derived observables
are also Lipschitz on such subsets. The condition \(s_i(Z)\ge\gamma\)
excludes the degenerate finite sample regime in which a row is essentially
all self loop mass.

\subsection{Discontinuity of hard neighborhood graphs}
\label{app:proof-knn-discontinuity}

We now prove Proposition~\ref{prop:knn-discontinuity}. Let
\(A_k(Z)\) denote the adjacency matrix of a directed \(k\)-nearest-neighbor
graph, with an arbitrary but fixed deterministic tie-breaking rule. The
same argument applies to symmetrized \(k\)-nearest-neighbor graphs whenever
the changed directed edge affects the symmetrized adjacency.

Fix \(1\le k<n-1\). Suppose there exists a point \(z_i\) for which the
\(k\)-th and \((k+1)\)-st nearest-neighbor distances are equal. That is,
there are two distinct candidate neighbors \(z_p\) and \(z_q\) such that
\[
\|z_i-z_p\|
=
\|z_i-z_q\|
=
r,
\]
and exactly \(k-1\) other points are strictly closer to \(z_i\).

Assume that the tie-breaking rule selects \(p\) rather than \(q\), so that
\[
A_k(Z)_{ip}=1,
\qquad
A_k(Z)_{iq}=0.
\]
For any \(\eta>0\), we can perturb \(z_p\) and \(z_q\) by less than
\(\eta\) so that \(z_q\) becomes slightly closer to \(z_i\) than \(z_p\),
while all other neighbor orderings remain unchanged. Denote the perturbed
cloud by \(Z^{(\eta)}\). Then
\[
\|Z^{(\eta)}-Z\|_\infty<\eta,
\]
but
\[
A_k(Z^{(\eta)})_{ip}=0,
\qquad
A_k(Z^{(\eta)})_{iq}=1.
\]
Thus an adjacency entry changes by one under an arbitrarily small
perturbation. Hence \(Z\mapsto A_k(Z)\) is discontinuous at \(Z\).

If a graph observable \(M(A_k(Z))\) takes different values on these two
adjacency patterns, then
\[
M(A_k(Z^{(\eta)}))\not\to M(A_k(Z))
\]
as \(\eta\to0\). Thus any adjacency-dependent observable that nontrivially
distinguishes the two graphs is also discontinuous at such a
neighbor-swap configuration.

\section{Empirical degree heterogeneity and aggregation residuals}
\label{app:empirical-degree-lumpability}

Table~\ref{tab:degree-lumpability} reports both reduction diagnostics at the
finest and median bandwidths. The within-class degree coefficient of variation
is the sample counterpart of the control term in
Proposition~\ref{prop:degree-heterogeneity}, whereas the aggregation residual
measures departure from label lumpability. 
\begin{table}[h]
\centering
\small
\setlength{\tabcolsep}{4.5pt}
\begin{tabular}{llcccc}
\toprule
Dataset & Scale & CV, stem & CV, layer 4 & $R_{\mathrm{lump}}$, stem & $R_{\mathrm{lump}}$, layer 4\\
\midrule
CIFAR-10 & $0.25\varepsilon_{\mathrm{med}}$ & $0.598$ & $0.101$ & $0.365$ & $0.224$\\
CIFAR-10 & $\varepsilon_{\mathrm{med}}$ & $0.340$ & $0.076$ & $0.168$ & $0.063$\\
CIFAR-100 & $0.25\varepsilon_{\mathrm{med}}$ & $0.631$ & $0.618$ & $0.478$ & $0.256$\\
CIFAR-100 & $\varepsilon_{\mathrm{med}}$ & $0.326$ & $0.212$ & $0.163$ & $0.038$\\
\bottomrule
\end{tabular}
\caption{Degree heterogeneity and aggregation residuals, averaged over three
trained models. Smaller values indicate more homogeneous kernel degree and a
closer class-level description of one-step transport.}
\label{tab:degree-lumpability}
\end{table}

\section{Matched class-chain perturbation response}
\label{app:matched-stability}

For each method, we form its perturbed and unperturbed $K\times K$ class
matrices and compute
\[
\Delta_{\mathrm{class}}
=
\frac{1}{2K}\sum_{a=1}^K
\left\|T_{a\cdot}(\widetilde Z)-T_{a\cdot}(Z)\right\|_1.
\]
The same uniform source class weighting is used for both methods. The operator
chain uses the diffusion rows from
Section~\ref{sec:operator-snapshots} without self loops. The graph chain assigns each directed
neighbor probability $1/k$ before class aggregation.

\begin{figure*}[h]
\centering
\begin{minipage}{0.49\textwidth}
\centering
\includegraphics[width=\linewidth]{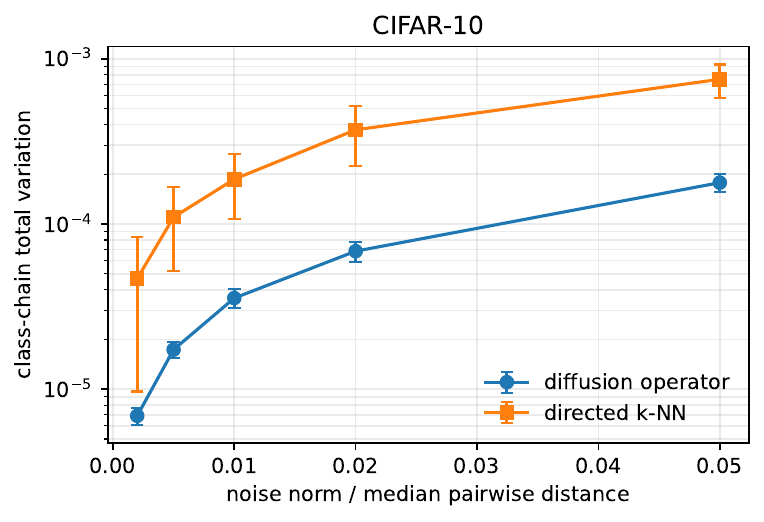}
\end{minipage}\hfill
\begin{minipage}{0.49\textwidth}
\centering
\includegraphics[width=\linewidth]{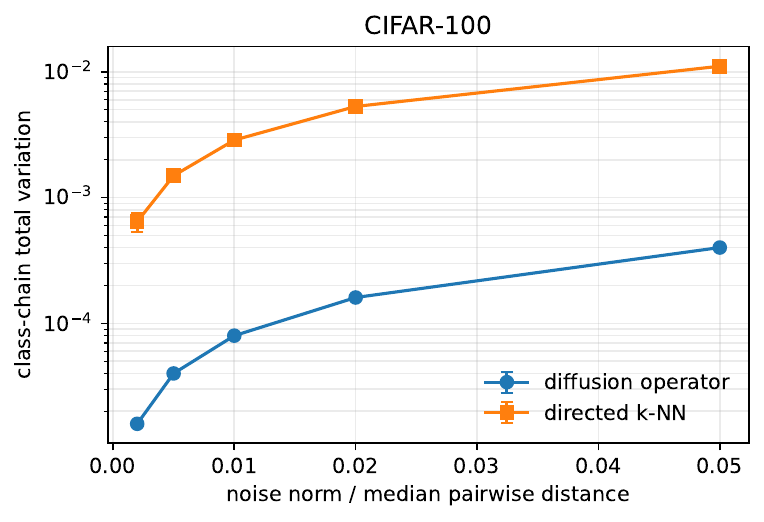}
\end{minipage}
\caption{Matched class-chain total variation under layer-4 feature
perturbations. Error bars show one standard deviation across the 30
model--noise realizations at each nonzero scale. The vertical axis is
logarithmic.}
\label{fig:matched-stability}
\end{figure*}

\section{Separate perturbation mechanisms}
\label{app:stability-mechanisms}

Figure~\ref{fig:stability-mechanisms} separates the mechanisms underlying the matched class-chain
comparison. Operator row total variation measures the smooth movement of
probability mass, directed edge rewiring counts changes in the hard
neighborhoods.

The theoretical bounds are conservative relative to the observed changes.
Realized directed edge-rewiring fractions remain between $0.17\%$ and
$3.7\%$ on CIFAR-10 and between $0.07\%$ and $1.5\%$ on CIFAR-100, while
the operator rows vary continuously throughout the tested perturbation
range.

\begin{figure}[h]
\centering
\includegraphics[width=.9\linewidth]{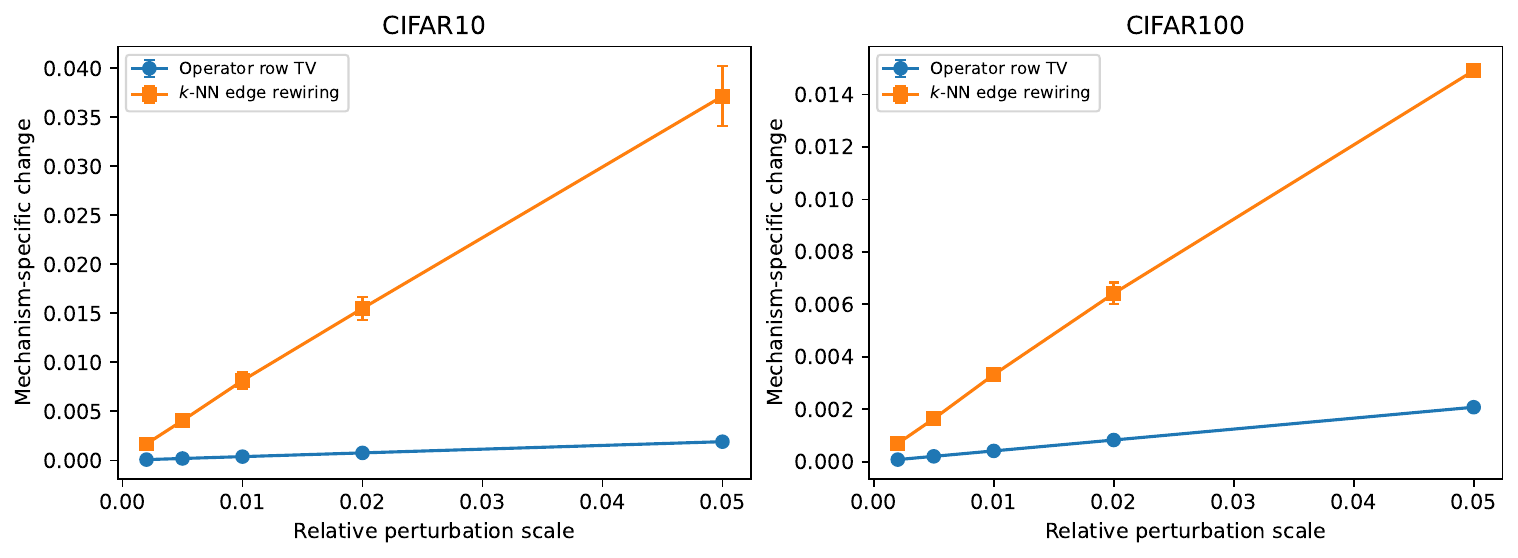}
\caption{Perturbation mechanisms at layer 4. Operator change is mean row total
variation. Graph change is the directed edge-rewiring fraction. Error bars show
one standard deviation across model seeds.}
\label{fig:stability-mechanisms}
\end{figure}

\section{Density-normalized diffusion operators}
\label{app:normalization}

Let
\[
q(x)
=
\sum_{c=1}^K
\pi_c
\int k_\varepsilon(x,z)\,d\nu_c(z)
\]
be the population kernel degree. After the source factor
$q(x)^{-\alpha}$ cancels under row normalization, define the
density-adjusted class affinity
\[
B_{ab}^{(\alpha)}
=
\mathbb E_{X\sim\nu_a,\,X'\sim\nu_b}
\left[
k_\varepsilon(X,X')q(X')^{-\alpha}
\right].
\]
The associated affinity-level overlap chain is
\[
T_{ab}^{(\mathrm{ov},\alpha)}
=
\frac{\pi_bB_{ab}^{(\alpha)}}
{\sum_r\pi_rB_{ar}^{(\alpha)}}.
\]

A simpler classwise-degree approximation is obtained by defining
\[
d_b=\sum_c\pi_cA_{bc}
\]
and replacing $q(X')$ by $d_b$ for $X'\sim\nu_b$. This gives
\[
T_{ab}^{(\mathrm{cw},\alpha)}
=
\frac{\pi_bA_{ab}d_b^{-\alpha}}
{\sum_r\pi_rA_{ar}d_r^{-\alpha}}.
\]
The approximation is exact when the population kernel degree is constant
within each destination class. At $\alpha=0$, both constructions reduce to
the original overlap chain.

The CIFAR experiments apply density normalization directly to the sample
operator for $\alpha\in\{0,\tfrac12,1\}$; the classwise formula above is used
only to describe its affinity-level interpretation.

\section{Controlled calibration experiments}
\label{app:controlled-evaluation}

The CIFAR experiments provide the main empirical evidence, while the Gaussian simulation
tests the affinity formula in the setting where it is exact.

We report mean pairwise separation
$\bar c_\varepsilon$, stationary class-chain leakage, the coarse gap
$1-\operatorname{SLEM}(\widehat P^{(K)})$, as well as root-mean-square soft diffusion radius.

Empirical class transport removes point self-loops and renormalizes rows
before aggregation, whereas the Gaussian overlap formulas predict the chain
formed by normalizing the expected affinities.

\subsection{Exact-model calibration}
\label{app:eval_synth}

We first test the Gaussian affinity reduction in the setting where its
assumptions hold. We sample balanced \(K=4\) Gaussian class-conditionals in dimension
\(d=16\) with shared covariance, vary both class separation and bandwidth
\(\varepsilon\), and compare empirical aggregate quantities with the affinity-derived
overlap predictions of Section~\ref{sec:gaussian-specialization}.

\begin{figure}[t]
    \centering
    \includegraphics[width=\linewidth]{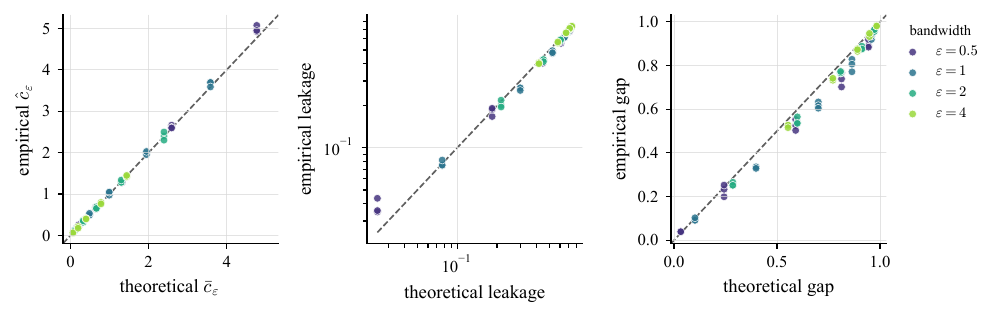}
    \caption{\textbf{Gaussian affinity calibration.}
    Affinity-derived overlap predictions versus empirical aggregate
    measurements for balanced Gaussian class-conditionals. Left: mean pairwise
    separation \(\bar c_\varepsilon\). Middle: stationary leakage. Right:
    overlap-chain gap. The dashed line is the identity.}
    \label{fig:synth_bridge}
\end{figure}

In Figure~\ref{fig:synth_bridge}, the finite-sample aggregate tracks the
affinity-derived prediction across the tested separation and bandwidth range.
The experiment therefore checks how closely the empirical aggregate recovers
the population overlap quantities derived above.

\section{Implementation details}
\label{app:implementation}

\subsection{CIFAR residual-network evaluation}
The CIFAR experiments use a torchvision ResNet-18 adapted to
$32\times32$ images. Its first convolution is replaced by a $3\times3$
stride-one layer, and the max-pooling operation is removed. Training uses
random crops and horizontal flips, followed by SGD with Nesterov momentum
$0.9$, weight decay $5\times10^{-4}$, an initial learning rate of $0.1$, and
cosine decay. CIFAR-10 is trained for 120 epochs and CIFAR-100 for 200.
Features are globally average pooled after the stem and after each residual
block group. The balanced subsets and remaining evaluation choices are given
in Section~\ref{sec:cifar-protocol}.

\subsection{Controlled Gaussian evaluation}

\paragraph{Operator construction.}
Given a feature snapshot
\[
Z=\{z_i\}_{i=1}^n\subset\mathbb R^d,
\]
we form the Gaussian affinity matrix
\[
W_{ij}
=
\exp\!\left(
-\frac{\|z_i-z_j\|^2}{4\varepsilon}
\right),
\]
the degree vector
\[
D_i=\sum_{j=1}^n W_{ij},
\]
and the row-stochastic diffusion operator
\[
P_{ij}
=
\frac{W_{ij}}{D_i}.
\]
When needed for spectral calculations, we also form the symmetric operator
\[
H
=
D^{-1/2}WD^{-1/2}.
\]
We use the raw-kernel construction where \(\alpha=0\) in the diffusion-maps normalization family.

The default bandwidth is the median heuristic
\[
\varepsilon_{\mathrm{med}}
=
\frac14
\operatorname{median}_{i<j}\|z_i-z_j\|^2.
\]
For snapshots containing more than \(2000\) points, the median is estimated
from a random subset of at most \(2000\) points.

\paragraph{Empirical separation matrix.}
Every operator experiment reported here has \(K\) classes. Let
\(\hat\mu_a\) be the empirical mean of class \(a\), and let
\(\hat\Sigma_{\mathrm{pool}}\) be the pooled within-class covariance. The
pairwise empirical separation is
\[
\hat c_\varepsilon^{(a,b)}
=
\frac14
(\hat\mu_a-\hat\mu_b)^\top
\big(
\hat\Sigma_{\mathrm{pool}}+(\varepsilon+\tau)I
\big)^{\dagger}
(\hat\mu_a-\hat\mu_b),
\qquad
a\neq b,
\]
where \(\tau=10^{-8}\) is a small numerical ridge and \({}^\dagger\)
is the numerically stable pseudoinverse used in the implementation.
The mean off-diagonal separation is
\[
\bar c_\varepsilon
=
\frac{1}{K(K-1)}
\sum_{a\neq b}
\hat c_\varepsilon^{(a,b)}.
\]

\paragraph{Class-level transport observables.}
Class transport is computed after deleting the diagonal entries of $P$ and
renormalizing each row. Mass is then averaged according to the source and
destination labels. Removing the diagonal eliminates the fixed self
contribution while preserving transitions to other samples from the same
class. The resulting matrix describes average one step class transport, with
exact lumpability determined by the condition in
Section~\ref{sec:operator-snapshots}.

The resulting aggregate matrix is denoted as \(\bar P^{(K)}\). Let \(\pi\) be its
stationary distribution. Stationary leakage is
\[
\mathrm{leakage}
=
\sum_{a=1}^K
\pi_a
\bigl(1-\bar P^{(K)}_{aa}\bigr),
\]
and
\[
\mathrm{gap}_{\mathrm{coarse}}
=
1-\max_{j\ge2}|\lambda_j(\bar P^{(K)})|.
\]

\paragraph{Label-boundary energy.}
For labels the code uses the one hot map
\[
g(i)=e_{y_i}\in\mathbb R^K,
\]
instead of the scalar digit value, and reports the total label-boundary energy
\[
E[\Gamma(g,g)]
=
\frac1n
\sum_{i=1}^n
\sum_{a=1}^K
\Gamma(g_a,g_a)(i).
\]
Expanding the definition of $\Gamma$ gives
\[
E[\Gamma(g,g)]
=
\frac{1}{n\varepsilon}
\sum_{i=1}^n
\sum_{j:\,y_j\neq y_i}
P_{ij}.
\]
The expression is the node-averaged Dirichlet energy of the one hot label
map under the diffusion operator.

\paragraph{Soft diffusion radius.}
For local scale we define
\[
r(i)^2
=
\sum_{j=1}^n
P_{ij}\|z_j-z_i\|^2,
\]
and report the root-mean-square soft diffusion radius
\[
\bar r_{\mathrm{soft}}
=
\left(
\frac1n
\sum_{i=1}^n r(i)^2
\right)^{1/2}.
\]

\paragraph{Synthetic bridge validation.}
The synthetic bridge experiment uses \(K=4\) classes in ambient dimension
\(d=16\). Class means are placed in a simplex configuration, and samples
are drawn from a shared-covariance Gaussian model
\[
\Sigma
=
\sigma^2 I+\mathrm{spike}\cdot vv^\top,
\]
with \(\sigma^2=1\) and spike strength \(0.5\). The separation parameter is
varied over
\[
\{0.8,\,1.4,\,2.0,\,2.8,\,3.8\},
\]
the bandwidth over
\[
\{0.5,\,1,\,2,\,4\},
\]
the number of samples per class is \(250\), and the code runs three trials
per configuration. For each trial, we compare empirical
\(\bar c_\varepsilon\), leakage, and
\(\mathrm{gap}_{\mathrm{coarse}}\) against the corresponding population
predictions.

\end{document}